\documentclass{article}
\PassOptionsToPackage{numbers}{natbib}
\usepackage[preprint]{neurips_2026}

\usepackage[utf8]{inputenc}
\usepackage[T1]{fontenc}
\usepackage{hyperref}
\usepackage{url}
\usepackage{booktabs}
\usepackage{amsfonts}
\usepackage{amsmath}
\usepackage{amssymb}
\usepackage{amsthm}
\usepackage{nicefrac}
\usepackage{microtype}
\usepackage{xcolor}
\usepackage{graphicx}
\usepackage{algorithm}
\usepackage{algorithmic}
\usepackage{multirow}
\usepackage{soul}
\usepackage{wrapfig}
\usepackage{capt-of}
\usepackage{hyperref}

\newtheorem{proposition}{Proposition}
\newtheorem{remark}{Remark}

\newcommand{\motif}{FAFM}
\newcommand{\lvel}{\mathcal{L}_{\mathrm{vel}}}
\newcommand{\lfm}{\mathcal{L}_{\mathrm{FM}}}

\usepackage{etoolbox}

\newcommand{\tablefontsize}{\small} % 可改成 \footnotesize / \scriptsize

\AtBeginEnvironment{tabular}{\tablefontsize}
\AtBeginEnvironment{tabular*}{\tablefontsize}

\newtheorem{theorem}{Theorem}

\newtheorem{lemma}{Lemma}

\title{Frequency‑Aware Flow Matching for Continuous and Consistent Robotic Action Generation}

\author{
Jianing Guo\textsuperscript{1, 4},
Fangzheng Chen\textsuperscript{1, 4}, 
Zihao Mao\textsuperscript{1},
Wong Lik Hang Kenny\textsuperscript{3},
Zhenhong Wu\textsuperscript{1}, 
Yu Li\textsuperscript{2, 4}, \\ \bfseries
Yishuai Cai\textsuperscript{4}, 
Yuanpei Chen\textsuperscript{2, 4}, 
Yikun Ban\textsuperscript{1}, 
Kai Chen\textsuperscript{3}, 
Qi Dou\textsuperscript{3}, 
Yaodong Yang\textsuperscript{2, 4},
Xianglong Liu\textsuperscript{1, 5, 6}, \\ \bfseries
Huijie Zhao\textsuperscript{1},
Simin Li\textsuperscript{1, 3 \thanks{Corresponding Author. E-mails: lisiminsimon@buaa.edu.cn.}}
}
\begin{document}

\maketitle
\vspace{-0.3in}
\begin{flushleft}
\textsuperscript{1}{Beihang University}, \textsuperscript{2}{Peking University}, \textsuperscript{3}{The Chinese University of Hong Kong}, \textsuperscript{4}{PKU-Psibot Lab}, \textsuperscript{5}{Zhongguancun Laboratory}, \textsuperscript{6}{Hefei Comprehensive National Science Center}
\end{flushleft}

%% ============================================================
\begin{abstract}

% Flow matching has been adopted as the standard paradigm for robotic manipulation due to its strong expressive power. However, existing approaches operate on discretized action chunks, which struggle with training data with heterogeneous frequencies and often produce temporally inconsistent actions that leads to control instability. In  this paper, we learn flow-matching policy that outputs continuous actions directly. To handle input with heterogeneous frequency, we model continuous actions by transforming discrete action input to frequency domain using the discrete cosine transform (DCT), performing flow matching in frequency coefficients and synthesize continuous actions via the inverse DCT transform. To generate smooth and gentle actions, we regularize the first‑order temporal derivative of the action sequence, which corresponds to a Sobolev‑type regularization that suppresses high‑frequency errors and discourages abrupt action changes. Our method do not introduce additional parameter, incurs negligible computational overhead, and speeds up convergence. Experimental results on synthetic example, obstacle avoidance, and soft‑body manipulation tasks shows our method exhibits better multimodal expressive power, higher success rates, greater solution diversity, faster convergence and reduced action jitter comparing with strong baselines.

Flow matching has emerged as a standard paradigm for robotic manipulation owing to its strong expressive power for modelling complex, multimodal action distributions, alongside similar approaches like diffusion policy. However, existing methods rely on discretized action chunks, making them brittle to demonstrations collected at heterogeneous control frequencies and prone to temporally inconsistent actions that degrade control stability. In this paper, we propose Frequency-Aware Flow Matching (FAFM), which outputs continuous, temporally consistent actions. To handle heterogeneous frequency input, we transform discrete action sequences into the frequency domain with the discrete cosine transform (DCT), perform flow matching over the resulting coefficients, and reconstruct continuous actions via cosine basis expansion. To generate temporally consistent actions, we regularize the first-order temporal derivative to promote smooth actions. This corresponds to a Sobolev-type constraint that suppresses high-frequency errors and discourages abrupt action changes. Our FAFM is simple, introduces no additional network parameters and applies to standalone flow-matching policies and vision-language action models. Across synthetic toy benchmark, obstacle avoidance, LapGym, and LIBERO, FAFM improves success rates, multimodal expressivity, motion smoothness, convergence speed, robustness to mechanical bias and mixed-frequency input. These gains are consistent when deployed on a real-world Franka robot. Code available at 
\url{https://anonymous.4open.science/r/FAFM}.

\end{abstract}

%% ============================================================
\section{Introduction}
\label{sec:intro}

% Flow matching policies have enabled robots strong capability to capture complex, multi-modal action distributions. Due to its strong representation capability, it serves as the action generation head of multiple robot foundation models, which granted flexible, general, and dexterous manipulation behavior in real-world tasks. To enable real-time inference, flow matching policies typically output an action chunk, which is a sequence of consecutive actions at discrete time steps being executed at a fixed frequency.

Flow matching~\citep{lipmanflow, liu2022rectified} endows robot policies to represent complex, multimodal action distributions. Owing to this expressive power, flow matching has been adopted as the action head in several robot foundation models~\citep{black2024pi_0, intelligence2025pi_, shukor2025smolvla, zheng2025x, zhang2026a1, cheang2025gr}, enabling flexible, generalizable, and dexterous manipulation in real‑world settings. To support real‑time inference, such policies typically predict an action chunk~\citep{zhao2023learning}, which is a sequence of consecutive actions at discrete time steps, executed at a fixed control frequency.

However, real-world robot motions are inherently smooth and continuous. As illustrated in Fig. 1, discretizing trajectories into action chunks introduces problems at both training and inference time. During training, the policy must reconcile demonstration data recorded at heterogeneous control frequencies that potentially cause gradient conflict. For example, the Open X-Embodiment dataset~\citep{o2024open} spans a wide range of control frequencies, from 3 Hz (RT-1~\citep{brohan2022rt}) to 50 Hz (ALOHA~\citep{zhao2023learning}), yet the policy is required to produce action chunks at a single fixed frequency. This discrepancy means that frequency information in the training data is entirely discarded. Consequently, the same robot executing the same physical trajectory at different frequencies will produce action chunks of different lengths and magnitudes, each of which is nonetheless a valid supervision signal.

% Presenting such conflicting data simultaneously during training leads to gradient conflict. 
% \textcolor{red}{Although the recent model $\pi_{0.7}$ conditions on episode speed as metadata during training}, gradient conflict arising from frequency heterogeneity is not fully resolved.

% However, real‑world robot motion is inherently smooth and temporally consistent. As illustrated in Fig. 1, discretizing trajectories into action chunks introduces challenges during both training and inference. During training, the policy must reconcile demonstration data recorded at heterogeneous control frequencies, which can induce gradient conflict. For example, the Open X‑Embodiment dataset spans a wide range of control frequencies, from 3 Hz (RT‑1) to 50 Hz (ALOHA), yet the policy is trained to output action chunks at a single, fixed frequency. As a result, frequency information present in the demonstrations is entirely discarded. Consequently, the same robot executing the same physical trajectory at different frequencies will produce action chunks of different lengths and magnitudes, yet each of them is nonetheless a valid supervision signal. Training on such self-contradictory data leads to gradient conflict. Although the recent model $\pi_{0.7}$ condition on episode speed as auxiliary metadata, this approach does not fully resolve gradient conflict arising from frequency heterogeneity.

During inference, actions at discrete timesteps are evaluated independently, without accounting for inter-timestep consistencies. Consequently, errors at adjacent timesteps may individually appear small, yet correspond to mutually conflicting motion directions. Executing such actions produces jittery, discontinuous motion that compromises control stability~\citep{scheikl2024movement, nguyen2025flowmp, yang2026abpolicy}. Although this may not directly reduce task success rates in rigid-body settings, it poses serious risks in soft-body manipulation. For instance, during liquid pouring, abrupt motion discontinuities can cause unintended spillage; in robotic surgery, jittery instrument motion can drive soft tissue into path-dependent deformation states that are difficult or impossible to reverse~\citep{attanasio2021autonomy}.

\begin{figure}[t]
    \centering
    \includegraphics[width=0.9\textwidth]{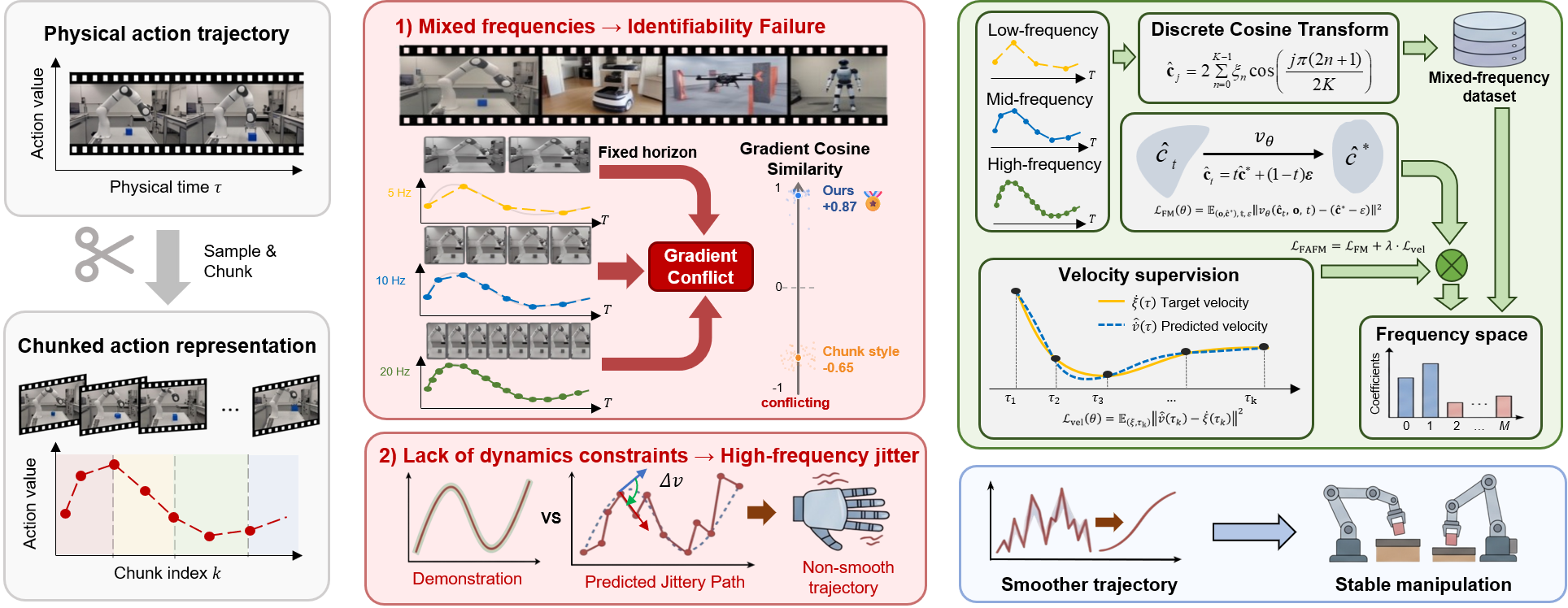}
    % \vspace{-0.4pt}
    \caption{Introduction overview. Action chunk discretization causes Identifiability Failure and high-frequency jitter, which FAFM resolves via frequency-domain flow matching and velocity supervision.}
    % \caption{Introduction overview. Left: real-world robot actions are inherently continuous trajectories, while existing policies typically represent them as discrete action chunks. Middle: this mismatch produces two key issues, namely Identifiability Failure under mixed-frequency training and high-frequency jitter caused by insufficient dynamics constraints. Right: our method addresses these issues by learning in the frequency domain and imposing velocity supervision, producing smoother trajectories and more stable manipulation.}
    \label{fig:intro}
    \vspace{-0.2in}
\end{figure}

% \noindent\begin{minipage}{\textwidth}
%     \centering
%     \includegraphics[width=\textwidth]{figure/intro_v1.png}
%     \vspace{-4pt}
%     \captionof{figure}{Introduction overview. Left: real-world robot actions are inherently continuous trajectories, while existing policies typically represent them as discrete action chunks. Middle: this mismatch produces two key issues, namely Identifiability Failure under mixed-frequency training and high-frequency jitter caused by insufficient dynamics constraints. Right: our method addresses these issues by learning in the frequency domain and imposing velocity supervision, producing smoother trajectories and more stable manipulation.}
%     \label{fig:intro}
% \end{minipage}
% \vspace{-6pt}

In this paper, we propose learning continuous and temporally consistent actions directly via Frequency-Aware Flow-Matching (FAFM). To accommodate discrete action trajectories collected at arbitrary sampling frequencies in training time, we transform the data into the frequency domain using the discrete cosine transform (DCT) \cite{ahmed1974discrete}. Although DCT was recently introduced to the robotic community for real-time execution in FAST \cite{pertsch2025fast}, we exploit it here as a basis function that naturally separates components at different frequencies. Flow matching is then performed in the frequency domain, and the predicted DCT coefficients parameterize a continuous-time action trajectory via a cosine basis expansion, which can be evaluated at arbitrary temporal resolutions. To further promote temporal consistency during inference, we additionally apply flow matching to the first-order temporal derivative of the action sequence. Because the DCT provides a continuous action representation in frequency space, this derivative is well defined and accurate. This formulation corresponds to a Sobolev-type regularization, in which errors at higher frequencies are penalized quadratically stronger than lower frequencies. Consequently, the resulting loss suppresses high-frequency errors and effectively discourages abrupt changes in the generated actions.

Overall, our FAFM is simple yet effective. It introduce no additional networks or learnable parameters, and applies to standalone flow matching policy and VLAs. Experiments on synthetic toy benchmarks, obstacle avoidance, LapGym~\citep{scheikl2023lapgym}, and LIBERO~\citep{liu2023libero} demonstrate that our FAFM consistently achieves higher success rate than strong baselines. On synthetic toy benchmark consisting of a pair of antisymmetric sinusoidal trajectories, our approach uniquely captures the correct multimodal structure. In obstacle avoidance, our FAFM simultaneously achieves high solution diversity and strong overall performance. On LapGym, a robotic surgery benchmark that requires soft-body manipulation, our method achieves smooth control and faster convergence. On LIBERO benchmark for VLAs, our FAFM demonstrate superior motion smoothness, robustness to mechanical bias and mixed-frequency input. The superiority is consistent when deployed to real-world Franka robot.

\textbf{Contribution.} Our contributions are twofold. (1) We introduce FAFM, a frequency‑aware flow matching method for continuous and temporally consistent action generation. 
% FAFM is parameter-free and applies to standalone flow matching and VLAs.
FAFM applies to standalone flow matching and VLAs.
(2) On various benchmarks, our FAFM achieves higher success rates, improved multimodal expressivity, faster convergence, higher motion smoothness and robustness against mechanical bias and heterogeneous input frequencies.

\section{Related Work}
\label{sec:related}

\textbf{Generative Policy Learning for Robot Action Generation.} Recent advances in diffusion and flow‑matching methods have reshaped generative action policies. For vision-based control, diffusion policy~\citep{chi2025diffusion} first introduced the paradigm of generating fixed‑horizon action chunks using denoising diffusion probabilistic models (DDPMs~\citep{ho2020denoising, song2020score}), which has since become a standard approach for continuous robot control. 3D Diffusion Policy~\citep{ze20243d} extended this framework to point‑cloud observations, demonstrating that explicit three‑dimensional representations substantially improve generalization in dexterous manipulation tasks, with follow-ups including ~\citep{urain2023se, reuss2023goal, wang2024equivariant, scheikl2024movement}. More recently, flow‑matching–based policies have emerged as a more inference‑efficient alternative, achieving action quality comparable to diffusion methods while significantly reducing sampling latency~\citep{zhang2024affordance}. FlowPolicy~\citep{zhang2025flowpolicy} further extended flow matching to 3D manipulation settings, confirming its effectiveness under complex spatial interactions, with similar works including~\citep{xu2024flow, braun2024riemannian, yang2026abpolicy, park2025acg}.

Parallel developments have occurred in large robot foundation models. Early systems such as RT‑1~\citep{brohan2022rt}, RT‑2~\citep{zitkovich2023rt} and OpenVLA~\citep{kim2024openvla} predict discrete action tokens autoregressively and subsequently decode them into continuous control signals, trading off expressivity for scalability. More recent vision‑language‑action (VLA) models including Octo~\citep{team2024octo}, CogACT~\citep{li2024cogact}, RDT-1B~\citep{liu2024rdt}, DexGraspVLA~\citep{zhong2026dexgraspvla} and GR00T series\citep{bjorck2025gr00t} etc have shifted toward diffusion‑based action heads to improve multimodal expressivity, robustness, and cross‑embodiment generalization. Flow‑matching–based action generation was later adopted in $\pi_0$~\citep{black2024pi_0}, demonstrating both improved inference efficiency and strong control performance; this design has since been inherited by successors such as $\pi_{0.5}$~\citep{intelligence2025pi_}, $\pi^{*}_{0.6}$~\citep{intelligence2025pi}, $\pi_{0.7}$~\citep{intelligence2026pi} , SmolVLA~\citep{shukor2025smolvla}, X-VLA~\citep{zheng2025x} and A1~\citep{zhang2026a1}, which now represents the state of the art for high‑performance continuous action generation in robot foundation models.

\textbf{Improving Action Representation for Robot Control.} For autoregressive actions, FAST \citep{pertsch2025fast} applies DCT to compress high-frequency action chunks into discrete tokens, enabling efficient action execution. NIAF~\citep{liu2026neural} extends the parallel decoding regression paradigm introduced by OpenVLA-OFT~\citep{kim2025fine} with implicit neural representations for continuous action function regression. However, autoregressive methods have limited capability in representing multimodal actions, thus later approaches moves towards diffusion-based methods~\citep{chi2025diffusion}. 
For diffusion policies, MPD \citep{scheikl2024movement}  (and its consistency-distilled acceleration FRMD~\citep{shi2025frmd}) represent actions as low-dimensional probabilistic weights over temporal basis functions (ProDMP~\citep{li2023prodmp}) for surgical tasks, generating smooth trajectories via globally smooth basis functions. However, this global parameterization couples all local trajectory decisions into a single weight vector, smoothing away local multimodal distinctions. FreqPolicy \citep{zhong2025freqpolicy} performs diffusion in a learned latent space structured by DCT frequency bands, but the learned latent entangles frequency content with observation features, causing mode leakage in multimodal settings. As for flow matching policies, SFP~\citep{jiang2025streaming} aligns the flow matching integration time with robot execution time, streaming actions on-the-fly at each ODE step. While achieving faster policy execution and better multi-modal representation capability, SFP operates under a receding horizon scheme and does not capture the joint distribution over future actions globally. ABPolicy~\citep{yang2026abpolicy} adopting flow matching in a B-spline control-point space to enforce trajectory smoothness within and across action chunks, with asynchronous inference for real-time control.

\section{Method}
\label{sec:method}

% Our approach makes two targeted changes to the chunk-based flow-matching pipeline. \emph{First}, we parameterize actions as DCT coefficients rather than discrete position sequences: the resulting targets are anchored to physical time and thus independent of control frequency, eliminating the gradient conflict formalized in Proposition~\ref{prop:identifiability}. \emph{Second}, we supervise the analytic velocity of the DCT-decoded trajectory. Together, these changes amount to optimizing a weighted $H^1$ Sobolev norm that penalizes high-frequency coefficient errors quadratically, giving rise to smooth actions, accelerated convergence, and robustness to execution noise.
% Our approach makes two targeted changes to the chunk-based flow-matching pipeline. \emph{First}, we parameterize actions as DCT coefficients anchored to physical time, eliminating the gradient conflict under heterogeneous control frequencies (Proposition~\ref{prop:identifiability}). \emph{Second}, we supervise the analytic velocity of the DCT-decoded trajectory, together yielding smooth actions, accelerated convergence, and robustness to execution noise.

In this section, we propose frequency-aware flow matching to generate continuous and temporally consistent actions. Our overall approach are illustrated in Figure~\ref{fig:method}.

\begin{figure*}[t]
\centering
\includegraphics[width=0.85\textwidth]{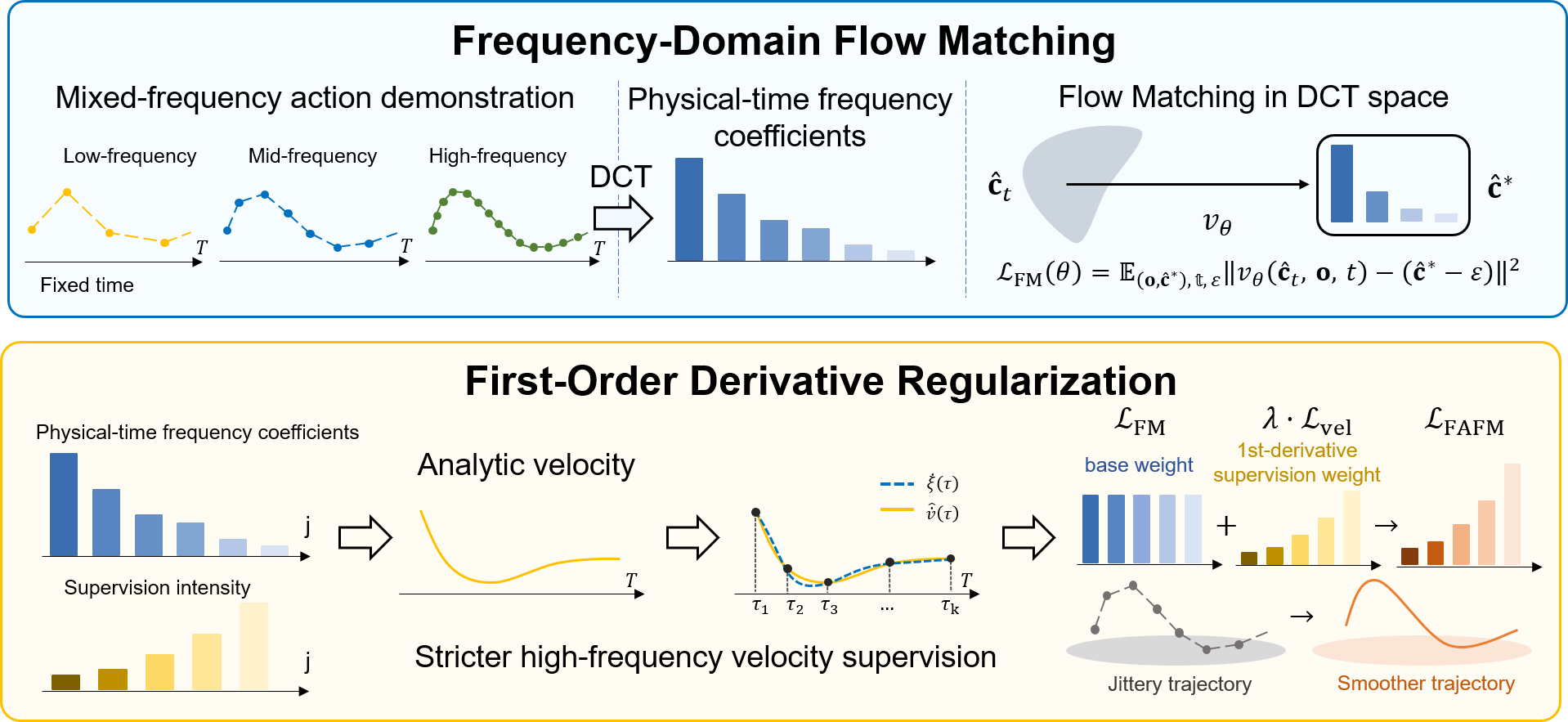}
\caption{Overview of \motif{}. Demonstration trajectories are mapped to DCT coefficients anchored to physical time, flow matching is performed in coefficient space, and analytic temporal-derivative supervision regularizes the continuous action for smooth execution.}
\label{fig:method}
\vspace{-0.2in}
\end{figure*}

% In this section, we propose frequency-aware flow matching to generate continuous and temporally consistent actions. To produce continuous action, we transform heterogeneous frequency input into frequency domain using DCT, and perform flow matching on that basis, which handles training data recorded at different frequencies. To produce temporally consistent actions, we regularize the first-order derivative of the DCT trajectory, leading to smooth and gentle manipulation behavior.

% handle heterogeneous frequency data, we transform them into frequency domain using DCT, and perform flow matching on that basis. To ensure 

% To handle heterogeneous frequency in training data, we perform DCT for .

% Our approach makes two targeted changes to the chunk-based flow-matching pipeline. \emph{First}, we parameterize actions as DCT coefficients anchored to physical time, eliminating mixed-frequency step-index ambiguity. \emph{Second}, we supervise the analytic velocity of the DCT-decoded trajectory, together yielding smooth actions, accelerated convergence, and robustness to execution noise.

%% ------------------------------------------------------------------
\textbf{Problem Formulation.}
We consider robot action generation as learning $p(\mathbf{A}| \mathbf{o})$ over action chunks $\mathbf{A} = [a_0, \ldots, a_{K-1}] \in \mathbb{R}^{K \times d}$ given observation $\mathbf{o}$. Without loss of generality, we use the conditional flow matching paradigm in $\pi$ series \cite{black2024pi_0}, which is widely used in modern robot foundation models, and can be extended to similar paradigms like diffusion policy~\citep{chi2025diffusion} easily. In conditional flow matching, given $\varepsilon \sim \mathcal{N}(\mathbf{0}, \mathbf{I})$ and target chunk $\mathbf{A}$, one trains a velocity field $v_\theta$ via:
\begin{equation}
  \lfm(\theta)
  \;=\;
  \mathbb{E}_{(\mathbf{o},\mathbf{A}),\,t,\,\varepsilon}
  \bigl\|
    v_\theta(\mathbf{A}^t,\, \mathbf{o},\, t)
    - (\mathbf{A} - \varepsilon)
  \bigr\|^2,
  \qquad
  \mathbf{A}^t = t\mathbf{A} + (1-t)\varepsilon.
  \label{eq:fm}
\end{equation}
Let $p_{\mathcal{D}}$ denote the demonstration distribution over observations, trajectories $\xi:[0,T_\xi]\to\mathbb{R}^d$, and control frequencies $f$; define chunk length $K(\xi,f)=\lfloor T_\xi f\rfloor$ and action $a_k(\xi,f)=\xi(k/f)$ for $k=0,\ldots,K-1$. The chunk is executed at control frequency $f_\xi$, so action $a_k$ corresponds to physical time $\tau_k = k/f_\xi$, with step index $k$ serving as its positional embedding.

\subsection{Continuous Action Generation via Frequency-Domain Flow Matching}
\label{sec:method_dct}

% \begin{wrapfigure}{r}{0.3\linewidth}
%   \vspace{-0.35in}
%   \label{fig:intro_gcs_simple}
%   \centering
%   \includegraphics[width=\linewidth]{figure/box_toy_gradient_conflict.png}
%   \caption{Gradient Conflict}
%   \vspace{-1.0em}
% \end{wrapfigure}

% Training robot foundation models inevitably requires huge amount of data. However, due to the inherent setup of different embodiments, these data are oftentimes not sampled at the same frequency. For example, in Open X-Embodiment dataset, trajectories are recorded at 3 Hz in RT-1 and 50 Hz in ALOHA. However, such difference in control frequencies are not considered in early robot foundation models like $\pi_{0}$ and $\pi_{0.5}$, until very recently $\pi_{0.7}$ includes it as meta data. However, consider a trajectory recorded at different frequencies, learning on such trajectory results in same observation and action as input, yet outputting different actions that corresponds to the input video frequencies, introducing gradient conflict in training. As shown in Fig. xxx, the issue is not fully solved even for $\pi_{0.7}$ that conditions on episode speed. Formally, we find learning on heterogeneous frequency is ill-posed.

Training robot foundation models requires vast amounts of data. Yet because different robot embodiments use different sensing and control stacks, trajectories are often collected at mismatched temporal resolutions. For example, in the Open X-Embodiment dataset~\citep{o2024open}, RT-1~\citep{brohan2022rt} trajectories are recorded at 3 Hz, whereas ALOHA~\citep{zhao2023learning} trajectories are recorded at 50 Hz. Early robot foundation models, such as $\pi_{0}$~\citep{black2024pi_0} and $\pi_{0.5}$~\citep{intelligence2025pi_}, typically ignored these differences.
% ; only recently has $\pi_{0.7}$ incorporated the episode speed as metadata. 
Nevertheless, learning from trajectories logged at heterogeneous frequencies can be problematic: identical observation–action histories may correspond to different next-action targets once the data are indexed in time, which can induce conflicting gradient signals during optimisation. 
% As shown in Fig. xxx, conditioning on episode speed in $\pi_{0.7}$ does not fully resolve this issue. 
More generally, we argue that standard step-indexed action chunk learning from heterogeneous frequency training data is ill-posed.

% Large-scale robot datasets aggregate demonstrations collected at heterogeneous control frequencies. For example, Open X-Embodiment spans 3\,Hz (RT-1) to 50\,Hz (ALOHA). Standard chunk-based flow matching, however, indexes each action token by its step $k$, implicitly mapping $a_k$ to physical time $\tau_k = k/f_\xi$ through positional embedding. Let $p_{\mathcal{D}}$ denote the demonstration distribution over observations, trajectories $\xi:[0,T_\xi]\to\mathbb{R}^d$, and control frequencies $f$, with $K=\lfloor T_\xi f\rfloor$ and $a_k=\xi(k/f)$. The standard objective trains $v_\theta$ via
% \begin{equation}
%   \lfm(\theta)
%   \;=\;
%   \mathbb{E}_{(\mathbf{o},\mathbf{A}),\,t,\,\varepsilon}
%   \bigl\|
%     v_\theta(\mathbf{A}^t,\, \mathbf{o},\, t)
%     - (\mathbf{A} - \varepsilon)
%   \bigr\|^2,
%   \qquad
%   \mathbf{A}^t = t\mathbf{A} + (1-t)\varepsilon.
%   \label{eq:fm}
% \end{equation}
% Under heterogeneous frequencies, the same step $k$ can map to distinct physical moments $\xi(k/f_{\mathrm{slow}})\neq \xi(k/f_{\mathrm{fast}})$ under the same positional embedding (Fig.~\ref{fig:intro_gcs_simple}). This creates an identifiability failure that persists even with more data or model capacity:

\begin{proposition}[Gradient Conflict in Mixed-Frequency Training]
\label{prop:identifiability}
Consider the step-indexed objective
\begin{equation}
    \mathcal{J}(\hat{a})
    \;=\;
    \mathbb{E}_{(\mathbf{o},\xi,f) \sim p_{\mathcal{D}}}\;
    \mathbb{E}_{k \sim \mathrm{Unif}\{0,\ldots,K(\xi,f)-1\}}
    \Bigl[
    \bigl\|
    \hat{a}(k \mid \mathbf{o}) - \xi(k/f)
    \bigr\|^{2}
    \Bigr].
    \label{eq:step-loss}
\end{equation}
If $f \mid \mathbf{o}$ is non-degenerate, there exist
$f_1 \neq f_2$ in the support of
$p_{\mathcal{D}}(f\mid\mathbf{o})$, the Bayes-optimal solution
\begin{equation}
    \hat{a}^{\star}(k \mid \mathbf{o})
    \;=\;
    \mathbb{E}_{(\xi,f)\sim p_{\mathcal{D}}(\cdot\mid\mathbf{o},\,k<K(\xi,f))}
    \!\bigl[\,\xi(k/f)\,\big|\,\mathbf{o},\,k\,\bigr]
    \label{eq:bayes-optimum}
\end{equation}
averages evaluations of $\xi$ at \emph{distinct} physical times $k/f_1 \neq k/f_2$, producing a physically infeasible target that lies on no single demonstration trajectory in the support of $p_{\mathcal{D}}(\xi\mid\mathbf{o})$.

Proof sketch: The $L^{2}$-optimal predictor is the conditional mean $\hat{a}^{\star}(\mathbf{o},k)=\mathbb{E}[\xi(k/f)\mid\mathbf{o},k]$. 
% , which under non-degenerate $f\mid\mathbf{o}$ averages $\xi$ at distinct physical times $k/f_{1}\neq k/f_{2}$ and generically lies off every demonstration trajectory. 
As $\hat{a}^{\star}$ depends only on $p_{\mathcal{D}}$, no amount of scaling, or optimization can remove this failure. See Appendix~\ref{app:proof_identifiability}.
\end{proposition}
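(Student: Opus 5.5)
The plan is to prove the statement in two parts: first identify the Bayes-optimal predictor of \eqref{eq:step-loss} as a conditional mean, then show that this mean genuinely mixes evaluations of $\xi$ at distinct physical times.

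\textbf{Step 1: the minimiser is a conditional mean.} Write \eqref{eq:step-loss} as a single expectation over the joint law of $(\mathbf{o},\xi,f,k)$, where $k\mid(\xi,f)$ is uniform on $\{0,\ldots,K(\xi,f)-1\}$. Condition on the pair $(\mathbf{o},k)$. Since $\hat{a}(k\mid\mathbf{o})$ is an arbitrary measurable function of $(\mathbf{o},k)$ and is not constrained across pairs, the objective decouples pointwise. For each fixed $(\mathbf{o},k)$, the bias--variance identity gives
\begin{equation*}
\mathbb{E}\bigl[\|c-Y\|^2\mid\mathbf{o},k\bigr] = \|c-\mathbb{E}[Y\mid\mathbf{o},k]\|^2 + \mathrm{tr}\,\mathrm{Cov}(Y\mid\mathbf{o},k),
\end{equation*}
with $Y=\xi(k/f)$. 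This is minimised uniquely, almost everywhere, at the conditional mean. Conditioning on $k$ restricts attention to pairs $(\xi,f)$ with $k<K(\xi,f)$, and the uniform weight $1/K(\xi,f)$ reweights the posterior. This yields \eqref{eq:bayes-optimum}, and I would write the reweighted posterior explicitly. Because this minimiser is a functional of $p_{\mathcal{D}}$ alone, no change in capacity, data scale, or optimiser can alter it.

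\textbf{Step 2: distinct physical times are averaged.} Non-degeneracy gives $f_1\neq f_2$ in the support of $p_{\mathcal{D}}(f\mid\mathbf{o})$. Choose any $k\ge1$ with $k<K(\xi,f_i)$ on sets of positive mass for both $i$. Such a $k$ exists, for example $k=1$, provided trajectories are long enough, which I would assume as a mild condition ($T_\xi f\ge 2$). Then $k/f_1\neq k/f_2$, and both components carry positive posterior weight. The predictor is therefore a convex combination $\lambda\,\mathbb{E}[\xi(k/f_1)\mid\cdot]+(1-\lambda)\,\mathbb{E}[\xi(k/f_2)\mid\cdot]$ (or an integral over $f$) with $\lambda\in(0,1)$.

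\textbf{Step 3: the target is off every trajectory.} This is the main obstacle, because as literally stated it is false in degenerate cases. If $\xi$ is constant, or if the average happens to equal $\xi(\tau)$ for some $\tau$, the target does lie on a demonstration trajectory. My plan is therefore to prove the claim generically and state the needed hypothesis explicitly. I would first try a concrete witness: a single deterministic trajectory $\xi$ given $\mathbf{o}$ that is injective (or strictly monotone in one coordinate) on $[0,T_\xi]$, with $f\in\{f_1,f_2\}$. The target is then $\lambda\xi(k/f_1)+(1-\lambda)\xi(k/f_2)$.

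The midpoint lies strictly between two distinct points on the curve. Whenever $\xi$ is not affine on $[k/f_2,k/f_1]$, it avoids the image of $\xi$. For the strictly convex example $\xi(\tau)=(\tau,\tau^2)$, off-curve membership follows because $\tau^2$ is strictly convex: the chord point lies strictly above the curve.

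For general $p_{\mathcal{D}}$, I would argue that the set of trajectory distributions for which the mixture lands on the support is nongeneric, for example measure-zero under perturbation of $\lambda$ or of $\xi$. Time-rescaling, meaning a physically valid chunk $\xi(k/f')$ at some effective frequency $f'$, is excluded by the same convexity argument. If a fully general proof proves intractable, I would state the conclusion as holding under that genericity or strict-curvature assumption, and present the explicit two-frequency example as the rigorous core.
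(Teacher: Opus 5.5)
Your Steps 1 and 2 are the paper's argument. The appendix likewise fixes $(\mathbf{o},k)$, uses the bias--variance identity to identify $\hat a^\star$ with $\mathbb{E}[\xi(k/f)\mid\mathbf{o},k]$, and then notes that non-degeneracy of $f\mid\mathbf{o}$ makes this mean mix $\xi(k/f_1)$ and $\xi(k/f_2)$.

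For Step 3 the paper offers only the assertion that such a convex combination ``generally falls off the trajectory.'' You are right that the statement as written fails for degenerate $\xi$. This happens for constant $\xi$, and also for $\xi$ affine on $[k/f_2,k/f_1]$, where the chord point is $\xi$ at an intermediate time. Your witness $\xi(\tau)=(\tau,\tau^2)$, with a single deterministic trajectory given $\mathbf{o}$, is a genuinely rigorous instance. The first coordinate of the chord point forces $\tau=\lambda t_1+(1-\lambda)t_2$, and strict convexity puts the second coordinate strictly above $\tau^2$.

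Two of your side remarks also repair imprecisions in the paper. At $k=0$ the physical times $k/f_1=k/f_2=0$ coincide, so one needs $k\ge 1$ and $K(\xi,f_i)\ge 2$. And the uniform draw of $k$ puts a factor $1/K(\xi,f)$ into the posterior, which the notation $p_{\mathcal{D}}(\cdot\mid\mathbf{o},k<K(\xi,f))$ hides.

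One auxiliary claim in your Step 3 is false and should be dropped: ``whenever $\xi$ is not affine on $[k/f_2,k/f_1]$, the chord point avoids the image of $\xi$.'' A non-affine curve can still pass through the chord point. For $d=1$ every continuous $\xi$ does so, because $\xi([k/f_2,k/f_1])$ is an interval containing both endpoints (intermediate value theorem).

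Consequently your genericity argument needs two extra hypotheses:
\begin{itemize}
\item $d\ge 2$, so that the image of an $H^1$ (hence rectifiable) trajectory is Lebesgue-null.
\item A thin support for $p_{\mathcal{D}}(\xi\mid\mathbf{o})$, e.g.\ finitely many trajectories. If the support contains a family of curves whose images sweep out an open set, the mean lands on \emph{some} demonstration for an open set of configurations.
\end{itemize}
In $d=1$ the most one can prove is a weaker statement. Whenever $\xi(k/f_1)\neq\xi(k/f_2)$, $\hat a^\star(k\mid\mathbf{o})$ differs from each averaged value $\xi(k/f_i)$. That is, the target is wrong for step $k$ without being off the curve.

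Your explicit planar example is unaffected by either caveat. Presenting it as the rigorous core, and stating the general claim under $d\ge 2$ plus a curvature or finite-support hypothesis, is the correct way to make the proposition true.
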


% We work in the Sobolev space $H^1([0,T_\xi];\mathbb{R}^d)$---the
% space of trajectories with square-integrable first
% derivatives---with inner product
% \begin{equation}
%     \langle f,g\rangle_{H^1}
%     \;=\;
%     \langle f,g\rangle_{L^2}
%     \;+\;
%     \langle f',g'\rangle_{L^2},
%     \label{eq:h1_inner}
% \end{equation}
% and expand trajectories in the $L^2$-orthonormal cosine basis
% $\phi_j(\tau)=\alpha_j\cos(\omega_j\tau)$,
% $\omega_j=j\pi/T_\xi$,
% $\alpha_0=\sqrt{1/T_\xi}$, $\alpha_j=\sqrt{2/T_\xi}$ for $j\geq 1$,
% which diagonalizes $-\partial_\tau^2$ under Neumann boundary
% conditions.
% The DCT coefficients $\hat{c}_j$ are precisely the $L^2$ inner
% products of $\xi$ with these basis functions; their structure
% directly shapes the Sobolev training objective in
% Section~\ref{sec:method_loss}.

By contrast, a frequency-domain representation can accommodate trajectories sampled at different rates. We therefore replace step-indexed action chunks with DCT representations. 
To construct such a representation, we model the physical trajectory as an element of $\mathcal{H}:=H^1([0,T_\xi];\mathbb{R}^d) :=\{f\in L^2(0,T_\xi;\mathbb{R}^d)\mid f'\in L^2(0,T_\xi;\mathbb{R}^d)\}$ with inner product
\begin{equation}
    \langle f,g\rangle_{H^1}
    \;=\;
    \langle f,g\rangle_{L^2}
    \;+\;
    \langle f',g'\rangle_{L^2},
    \label{eq:h1_inner}
\end{equation}
and expand in the $L^2$-orthonormal cosine basis $\phi_j(\tau)=\alpha_j\cos(\omega_j\tau)$,
$\omega_j=j\pi/T_\xi$, $\alpha_0=\sqrt{1/T_\xi}$, $\alpha_j=\sqrt{2/T_\xi}$ for $j\geq 1$, which diagonalizes $-\partial_\tau^2$ under Neumann boundary conditions. The target trajectory admits the expansion $\xi^*=\sum_{j\geq 0}c_j^*\phi_j$ with $c_j^*=\langle\xi^*,\phi_j\rangle_{L^2}$.
% we expand physical trajectories in the $L^2$-orthonormal cosine basis of $H^1([0,T_\xi];\mathbb{R}^d)$: $\phi_j(\tau)=\alpha_j\cos(\omega_j\tau)$, $\omega_j=j\pi/T_\xi$, $\alpha_0=\sqrt{1/T_\xi}$, $\alpha_j=\sqrt{2/T_\xi}$ for $j\geq 1$ (see Appendix~\ref{app:proof_h1_zihao} for the full $H^1$ setup). 
The DCT coefficients are the $L^2$ projections of $\xi$ onto this basis; their structure directly shapes the Sobolev objective in Section~\ref{sec:method_loss}. To remove dependence on step index, we fix the physical chunk duration $T_\xi$ and represent each trajectory by its DCT-II~\citep{strang1999discrete} coefficient vector $\hat{\mathbf{c}}\in\mathbb{R}^{(M+1)\times d}$, $M\ll K$:
\begin{equation}
  \hat{c}_j = 2\sum_{n=0}^{K-1}\xi_n
    \cos\Bigl(\frac{j\pi(2n+1)}{2K}\Bigr),
  \quad j=0,\ldots,M,
  \label{eq:dct2_forward}
\end{equation}
where $\xi_n=\xi^*(n/f_\xi)$ and $K=\lfloor T_\xi f_\xi\rfloor$
varies with the demonstration frequency.
Because $\frac{j\pi(2n+1)}{2K}=\frac{j\pi\tau_n}{T_\xi}$ depends
only on physical time $\tau_n=n/f_\xi$,
Eq.~\eqref{eq:dct2_forward} is a quadrature rule for the
frequency-independent integral
\begin{equation}
  c_j^* \;=\; \frac{2}{T_\xi}\int_0^{T_\xi}
    \xi^*(\tau)\cos\Bigl(\frac{j\pi\tau}{T_\xi}\Bigr)d\tau,
  \qquad
  \hat{c}_j = c_j^* + O(1/K).
  \label{eq:dct_limit}
\end{equation}
Since $c_j^*$ depends on $\xi^*$ and $T_\xi$, instead of 
$f_\xi$, two demonstrations of the same physical motion at
different control frequencies produce \emph{identical} coefficient
targets up to $O(1/K)$, directly resolving the identifiability
failure in Proposition~\ref{prop:identifiability}. Moreover, the frequency-domain coefficients serve as weights over cosine basis functions, enabling reconstruction of a continuous action trajectory executable at arbitrary temporal resolutions:
\begin{equation}
  \hat{v}(\tau) = \tfrac{1}{2}\hat{c}_0
  + \sum_{j=1}^{M}\hat{c}_j\cos(\omega_j\tau),
  \qquad
  \omega_j = \frac{j\pi}{T_\xi},
  \quad \tau\in[0,T_\xi].
  \label{eq:dct_continuous}
\end{equation}
At inference, the ODE is integrated in the $(M+1){\times}d$-dimensional coefficient space and the action is recovered via Eq.~\eqref{eq:dct_continuous}. \motif{} replaces the chunk-space transport target with the coefficient vector $\hat{\mathbf{c}}^*$, forming the straight-line interpolant $\hat{\mathbf{c}}_t = t\hat{\mathbf{c}}^* + (1-t)\varepsilon$, the flow-matching loss becomes
\begin{equation}
  \lfm(\theta)
  = \mathbb{E}_{(\mathbf{o},\hat{\mathbf{c}}^*),\,t,\,\varepsilon}
  \Bigl\|
    v_\theta\!\left(\hat{\mathbf{c}}_t,\,\mathbf{o},\,t\right)
    - \bigl(\hat{\mathbf{c}}^* - \varepsilon\bigr)
  \Bigr\|^2,
  \label{eq:fm_motif}
\end{equation}
which is gradient-equivalent to an $\ell^2$ error on decoded position sequences (Lemma~\ref{lem:impl_equiv}, Appendix~\ref{app:proof_h1_zihao}).

%% ------------------------------------------------------------------
\subsection{Temporally Consistent Actions via First-Order Derivative Regularization}
\label{sec:method_loss}

% During execution, actions in an action chunk are executed sequentially, introducing temporal dependency between adjacent actions in the same action chunk. However, existing flow matching paradigm treat each action in a chunk independently, ignoring temporal consistency. As a consequence, errors in predicting adjacent actions can be small yet of opposite directions, creating unstable and jittery actions. In soft-body manipulation, such unstable actions can cause spillage in liquid pouring and irreversible deformation in soft-body manipulation.

During inference, adjacent actions within a chunk are executed sequentially and are therefore temporally coupled. However, most flow-matching formulations treat the actions in a chunk independently, ignoring such temporal correlations. As a result, adjacent actions may each incur only small errors but deviate in opposite directions, producing high-frequency oscillations and jitter. Such instability can be particularly costly in soft-body manipulation, causing spillage during liquid pouring and irreversible deformation during surgical tissue handling.

% While tolerable in rigid-body manipulation, jittery actions can cause irreversible .

% are not correlated, which leads .

% Standard flow matching $\lfm$ supervises only the $K$ discrete position samples in the action chunk, leaving the continuous velocity profile between samples unconstrained. High-frequency coefficient errors can therefore manifest as action jitter even when the position loss is small. While this may be tolerable in rigid-body tasks, in soft-body manipulation (e.g.\ liquid pouring and surgical tissue handling) velocity discontinuities can directly cause irreversible deformation or spillage.

Beyond supervising per-timestep action values, the first-order temporal derivative of actions provides an explicit supervision signal for how actions should evolve over time. Penalizing changes between neighbouring actions yields coupled predictions within a chunk, discourages abrupt action variations, and improves temporal consistency. In discrete action chunks, this derivative is typically approximated using finite differences between adjacent actions. However, such estimates can be noisy and implicitly assume a piecewise-linear evolution between samples. In contrast, our DCT-based continuous action parameterization yields analytic temporal derivatives directly from the DCT coefficients, enabling accurate and stable derivative supervision. In particular,
\begin{equation}
  \hat{\dot{v}}(\tau)
  = -\sum_{j=1}^{M}\hat{c}_j\,\omega_j\sin(\omega_j\tau),
  \label{eq:vel_analytic}
\end{equation}
which is exact within the $M$-mode subspace and free of finite-difference noise. We therefore additionally supervise the first-order derivative of actions by flow-matching:
\begin{equation}
  \lvel(\theta)
  = \mathbb{E}_{(\xi,\tau_k)}
  \bigl\|\hat{\dot{v}}(\tau_k) - \dot{\xi}(\tau_k)\bigr\|^2,
  \label{eq:lvel}
\end{equation}
where $\dot{\xi}(\tau_k)$ is the first-order derivative of the demonstration. The total objective is
\begin{equation}
  \mathcal{L}_{\motif{}} = \lfm + \lambda\cdot\lvel,
  \qquad \lambda=1.
  \label{eq:total}
\end{equation}
Empirically, we find that $\lambda=1$ already yields good result across tasks and use this value throughout our paper. Accordingly, FAFM introduces no additional networks and can be applied off-the-shelf.

\textbf{Understanding the loss.} We further find our FAFM loss corresponds to squared $H^1_\mu$ Sobolev norm, and thus enhancing smoothness by penalizing high-frequency error, empirically speeds up convergence and increase robustness against mechanical bias. Specifically, let $\delta_j := \hat{c}_{\theta,j} - c_j^*$ denote coefficient prediction error. Then Lemma~\ref{lem:impl_equiv} (Appendix~\ref{app:proof_h1_zihao}) gives $\lfm = \sum_{j=0}^{M}\delta_j^2$, while DST orthogonality gives $\lvel = \sum_{j=1}^{M}\omega_j^2\,\delta_j^2$. Hence,
\begin{equation}
  \mathcal{L}_{\motif{}}(\theta)
  \;=\;
  \sum_{j=0}^{M}\mu_j\,\delta_j^2,
  \qquad
  \mu_j \;:=\; 1+\lambda\omega_j^2,
  \label{eq:h1_norm}
\end{equation}
which is the squared $H^1_\mu$ Sobolev norm on the projection residual $\hat{\xi}_\theta - P_M\xi^*$. Hence,

\begin{theorem}[\motif{} loss as a weighted $H^1$-projection error]
\label{thm:h1_zihao}
Let $V_M:=\mathrm{span}\{\phi_0,\dots,\phi_M\}
\subset H^1([0,T_\xi];\mathbb{R}^d)$,
let $P_M:L^2\to V_M$ denote $L^2$-orthogonal projection,
and let
$\hat{\xi}_\theta=\sum_{j=0}^{M}\hat{c}_{\theta,j}\phi_j\in V_M$
be the \motif{}-decoded prediction.
Under centered-grid sampling $\tau_n=(n+\tfrac{1}{2})/f_\xi$
and finite-difference velocity targets,
\begin{equation}
    \mathcal{L}_{\motif{}}(\theta)
    \;=\;
    \bigl\|\hat{\xi}_\theta - P_M\xi^*\bigr\|_{H^1_\mu}^2,
    \qquad
    \|f\|_{H^1_\mu}^2
    \;:=\;
    \sum_{j\geq 0}
    (1+\lambda\omega_j^2)\,
    \bigl|\langle f,\phi_j\rangle_{L^2}\bigr|^2.
    \label{eq:loss_as_proj}
\end{equation}
Proof sketch: Centered-grid DCT orthogonality gives $\lfm=\sum_j\delta_j^2$, while DST orthogonality gives $\lvel=\sum_j\omega_j^2\delta_j^2$. Eq.~\eqref{eq:h1_norm} then follows immediately; the full proof is in Appendix~\ref{app:proof_h1_zihao}.
\end{theorem}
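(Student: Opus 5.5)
The proof splits $\mathcal{L}_{\motif{}}=\lfm+\lambda\lvel$ into its two pieces and diagonalizes each in the coefficient errors $\delta_j=\hat{c}_{\theta,j}-c_j^*$ by a discrete orthogonality relation on the centered grid $\tau_n=(n+\tfrac12)/f_\xi$, $n=0,\dots,K-1$. The right-hand side of \eqref{eq:loss_as_proj} needs no work. Since $\hat{\xi}_\theta-P_M\xi^*=\sum_{j\le M}\delta_j\phi_j\in V_M$, its $L^2$ coefficients are $\delta_j$ for $j\le M$ and zero otherwise. Hence $\|\hat{\xi}_\theta-P_M\xi^*\|_{H^1_\mu}^2=\sum_{j=0}^M(1+\lambda\omega_j^2)\delta_j^2$, and it suffices to show $\lfm=\sum_j\delta_j^2$ and $\lvel=\sum_j\omega_j^2\delta_j^2$. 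Throughout I read the losses per sample, i.e.\ at the level of the regressed quantity, with expectations over $(\mathbf{o},t,\varepsilon)$ taken outside. The "targets" $c_j^*$ are then the coefficients of the interpolant of the sampled data, as in Lemma~\ref{lem:impl_equiv}.

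\textbf{Position term.} On the centered grid, $\omega_j\tau_n=j\pi(2n+1)/(2K)$. The sampled cosines $\phi_j(\tau_n)$ therefore form the DCT-II matrix, whose columns are orthogonal:
\[
\sum_{n=0}^{K-1}\cos(\omega_j\tau_n)\cos(\omega_{j'}\tau_n)=\tfrac{K}{2}(1+\mathbb{1}[j=0])\,\mathbb{1}[j=j'],\qquad 0\le j,j'\le M<K .
\]
With the normalization $\alpha_j$ and the time-averaging weight $T_\xi/K$, this yields $\tfrac{T_\xi}{K}\sum_n\|\sum_j\delta_j\phi_j(\tau_n)\|^2=\sum_j\delta_j^2$. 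This is exactly Lemma~\ref{lem:impl_equiv}: the $\ell^2$ error on decoded positions equals the coefficient $\ell^2$ error. I will invoke that lemma directly rather than redo it.

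\textbf{Velocity term.} The analytic derivative \eqref{eq:vel_analytic} is $-\sum_{j\ge1}\delta_j\alpha_j\omega_j\sin(\omega_j\tau_n)$ in error form. The orthogonality needed is the DST-II relation
\[
\sum_{n=0}^{K-1}\sin\Bigl(\tfrac{j\pi(2n+1)}{2K}\Bigr)\sin\Bigl(\tfrac{j'\pi(2n+1)}{2K}\Bigr)=\tfrac K2\,\mathbb{1}[j=j'],\qquad 1\le j,j'\le M<K .
\]
I verify it by writing the product as $\tfrac12[\cos((j-j')\theta_n)-\cos((j+j')\theta_n)]$ and summing the geometric series of $e^{i m\pi(2n+1)/(2K)}$. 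This sum vanishes for $0<|m|<2K$, so the off-diagonal terms drop out, and for $j=j'$ the $\cos(2j\theta_n)$ part also vanishes since $2j\le 2M<2K$. Note that $\sin$ with $j=0$ is identically zero, so no $j=0$ term appears, consistent with $\omega_0=0$. This yields $\tfrac{T_\xi}{K}\sum_n\|\hat{\dot v}(\tau_n)-\dot{\xi}(\tau_n)\|^2=\sum_{j=1}^M\omega_j^2\delta_j^2$, provided the target $\dot{\xi}(\tau_n)$ equals the derivative of $P_M\xi^*$ evaluated on the grid.

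\textbf{Main obstacle.} The delicate step is that last proviso: the hypothesis "finite-difference velocity targets" has to make the velocity target exactly $\partial_\tau(P_M\xi^*)(\tau_n)$, not merely approximately. My plan is to interpret the target as the derivative of the band-limited DCT interpolant of the samples $\xi_n$. This is the same convention that makes $c_j^*$ the target in the position term, and under it the target has the form $-\sum_j c_j^*\alpha_j\omega_j\sin(\omega_j\tau_n)$ exactly. A genuine finite difference instead replaces $\omega_j$ by $\tfrac{2K}{T_\xi}\sin(\tfrac{\omega_jT_\xi}{2K})$ and only agrees up to $O(\omega_j^3/K^2)$. I would state this explicitly, either as the definition of the target or as an $O(M^3/K^2)$ correction to the weights $\mu_j$. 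With both terms diagonalized, summing them with weight $\lambda$ gives \eqref{eq:h1_norm}, which matches the norm computed above.
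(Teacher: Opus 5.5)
Your proposal is correct and follows the paper's proof essentially step for step. It uses Lemma~\ref{lem:impl_equiv} for the position term and DST-II orthogonality on the centered grid, with $\alpha_j^2\cdot K/2=K/T_\xi$, for the analytic-velocity term. It then normalizes by $T_\xi/K$, drops $\theta$-independent remainders, and sums with weight $\lambda$. One small slip: for odd $m$ the complex geometric sum equals $i/\sin(m\pi/(2K))$ rather than zero. Its real part, which is all you use, does vanish.

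Your handling of the targets is in fact more careful than the paper's. The paper absorbs the finite-difference residual $r_n$ into a supposedly $\theta$-independent remainder $R_M^{\mathrm{vel}}$, although $r_n$ is only orthogonal to the sines up to $O(1/K^2)$. Its Lemma also ignores aliasing of modes $j\geq K$ on the grid. One correction to your alternative remark: a genuine finite-difference target enters through a $\theta$-dependent cross term. It shifts the per-mode velocity target rather than perturbing the weights $\mu_j$. So your primary plan, the interpolant convention for both $c_j^*$ and $\dot{\xi}(\tau_n)$, is the right way to make the identity exact.
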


\textbf{Remark 1: Higher-order dynamics correction.}
The $\omega_j^2$ weight in $\mu_j$ supplies stronger gradient pressure on high-frequency coefficient errors than standard $L^2$ flow matching, directly suppressing high-frequency artifacts such as action jitter. A formal mode-wise efficiency analysis across derivative orders is given in Appendix~\ref{app:correction_efficiency}.

\textbf{Remark 2: Convergence acceleration.} For physically smooth trajectories, $\mu_j$ acts as a built-in spectral preconditioner: it encourage higher gradient on high-frequency component, which empirically accelerate convergence. A formal analysis is available in Appendix~\ref{app:convergence}.

\textbf{Remark 3: Robustness to embodiment mechanical bias.} A constant mechanical offset perturbs only the DC coefficient under \motif{}'s representation, leaving all shape-defining coefficients exactly unchanged. In contrast, a constant offset perturb all actions in flow matching chunks, causing greater harm to the robustness of flow matching policies. See Appendix~\ref{app:bias_robustness} for a derivation.

\section{Experiments}
\label{sec:exp}

\textbf{Evaluation environments}: We evaluate \motif{} in two stages. First, as a standalone policy, it is assessed on synthetic toy benchmark, obstacle avoidance, LapGym \citep{scheikl2023lapgym}, and deployed on a real Franka robot. Second, \motif{} is integrated into VLA backbones, evaluated on LIBERO \citep{liu2023libero}, and again deployed on a real Franka robot. The specific purpose of each environment is described below.

\textbf{Evaluation metrics}: We adopt success rate (SR) as the primary measure of task reliability. Motion smoothness is assessed using the log dimensionless jerk (LDLJ), computed over successful episodes, which is widely used in robot-assisted surgery and surgical skill assessment~\citep{singh2021motion, aghazadeh2025new, rodriguez2026open}. LDLJ is defined on a logarithmic scale, so small numerical differences reflect exponential improvement in the underlying motion smoothness. For both metrics, higher values indicate better performance.

\textbf{Baselines}: In the standalone-policy setting, our baselines include Diffusion Policy (DP)~\citep{chi2025diffusion}, Flow Matching Policy (FM)~\citep{zhang2024affordance}, Streaming Flow Policy (SFP)~\citep{jiang2025streaming}, FreqPolicy~\citep{zhong2025freqpolicy}, and MPD~\citep{scheikl2024movement}, with our method implemented on flow matching backbone. In the VLA setting, we use $\pi_0$~\citep{black2024pi_0} or $\pi_{0.5}$~\citep{intelligence2025pi_} as the backbone, our method is implemented by changing their action head to our \motif{}-based variant. See implementation details and additional training details in Appendix \ref{app:training_setting}.

%% -------------------------------------------------------
\subsection{Standalone Policy Experiments}
\label{sec:exp_small}

\subsubsection{Synthetic Toy Benchmark Results}

We first verify the ability to learn smooth, multimodal actions on a simple 1D task. The demonstrations comprise two crossing sinusoidal modes, as shown in Fig.~\ref{fig:toy}. While our FAFM solves this task easily, existing methods struggle on this simple setting. FM preserves both modes but yields jittery trajectories. SFP fails to separate the two modes because its policy conditions on only a single previous timestep, which is insufficient when the modes overlap. FreqPolicy exhibits higher bias and occasionally produces trajectory with lower amplitude, as its low-frequency anchor is entangled with observation features in latent space. MPD captures global trajectory structure but over-smooths local modes, resulting in lower amplitude output. In contrast, FAFM cleanly separates the two sinusoidal modes while producing smooth trajectories consistently.

\begin{figure}[!t]
    \centering
    \vspace{-0.35in}
    \includegraphics[width=\linewidth]{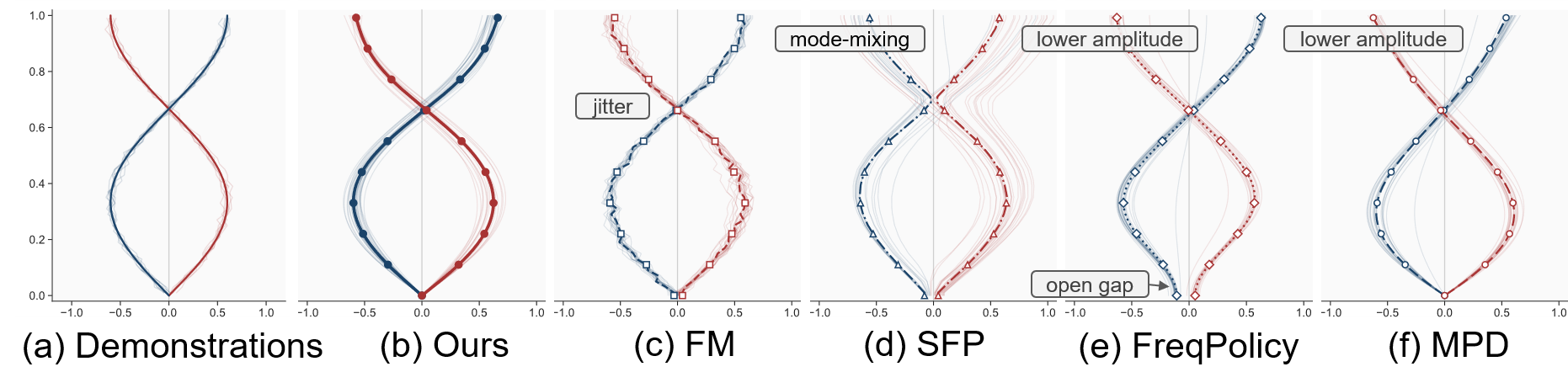}
    % \caption{Toy bi-modal trajectory experiment. (a) Training demonstrations contain two crossing S-curve modes. (b) FM in chunk space $\mathbb{R}^K$ produces high-frequency jitter and non-smooth trajectories, reflecting instability in the position-space velocity field. (c) SFP tends toward the marginal distribution of demonstrations, generating trajectories that hug the boundary region rather than committing to either mode. (d) MPD produces smooth and well-formed trajectories via RBF basis functions, but may generates mixed outputs intermediate between the two modes. (e) \motif{} produces smooth, continuous, and converged trajectories with clear separation into two distinct modes.}
    \vspace{-0.3in}
    \caption{Synthetic toy bi-modal trajectory experiment. FAFM uniquely capture the multimodal modes and yields smooth trajectory.}
    \label{fig:toy}
    \vspace{-0.2in}
\end{figure}

\begin{figure*}[!t]
\centering
\begin{minipage}[t]{0.32\textwidth}
\vspace{0.15in}
\centering
\scriptsize
\setlength{\tabcolsep}{2.5pt}
\renewcommand{\arraystretch}{0.95}
\begin{tabular}{@{}lccc@{}}
\hline
Method & SR & LDLJ & M \\
\hline
DP         & 35 & $-9.16\pm0.77$ & 8  \\
SFP        & 49 & $-6.98\pm0.82$ & 3  \\
MPD        & 16 & $-6.78\pm0.47$ & 2  \\
FreqPolicy & 39 & $-9.02\pm1.11$ & 10 \\
FM         & 48 & $-8.62\pm0.69$ & 14 \\
\hline
ours       & \textbf{61} & \textbf{-5.60$\pm$1.08} & 12 \\
\hline
\end{tabular}
\vspace{-0.1in}
\captionof{table}{Avoidance results.}
\vspace{0.1in}
\label{tab:obstacle}
\end{minipage}\hspace{0.01\textwidth}
\begin{minipage}[t]{0.64\textwidth}
\centering
\vspace{0.1in}
\includegraphics[width=\linewidth]{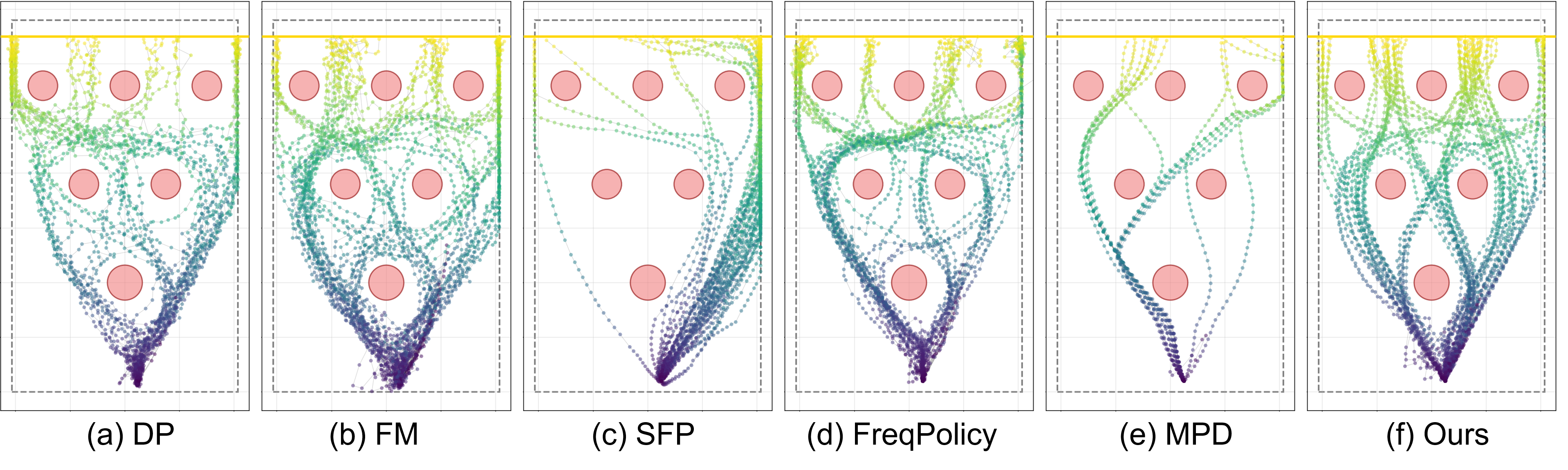}
\vspace{-0.3in}
\captionof{figure}{Successful obstacle avoidance trajectories.}
\label{fig:obstacle}
\end{minipage}
\vspace{-0.2in}
\end{figure*}

%% ============================================================
%% 4.1.2  Obstacle Avoidance: Success Rate and Multimodal Coverage
%% ============================================================

\subsubsection{Obstacle Avoidance Results}

\textbf{Settings.} Built on PyBullet, the obstacle-avoidance task requires a robot arm to navigate across multiple obstacles. Because many valid paths exist, it is a standard benchmark for evaluating solution diversity. In addition to success rate and smoothness, we report the number of distinct trajectory modes (M), defined as modes traversed by more than two successful trajectories.

\textbf{Results.} As shown in Table~\ref{tab:obstacle}, \motif{} simultaneously achieves high success rate, motion smoothness, and solution diversity. SFP and MPD produce smooth trajectories but fail to learn multimodal policies, resulting in limited path diversity. Conversely, DP, FreqPolicy, and FM discover diverse solutions but generate jittery actions, as reflected by lower LDLJ. These observations are corroborated qualitatively in Fig.~\ref{fig:obstacle}: the solutions of SFP and MPD are confined to a narrow set of paths, whereas those of DP, FM, and FreqPolicy are noisy and unstable. In contrast, FAFM is the only method achieving both smoothness and diversity.

\subsubsection{LapGym: Surgical Manipulation Results} 

\textbf{Settings.} LapGym is an open-source simulation environment for robot-assisted laparoscopic surgery. In LapGym, the robot operate on deformable objects, such as tissues and suture threads, demanding gentle and precise manipulation. As shown in Fig.~\ref{fig:lapgym}(a), we evaluate on four tasks from this environment: rope threading (\textsc{RT}), grasp-lift-touch (\textsc{GLT}), bimanual tissue manipulation (\textsc{BTM}), and ligating loop (\textsc{LL}). All methods are trained for 3,000 epochs for their best performance.

\textbf{Results.} As shown in Table~\ref{tab:lapgym_3k}, \motif{} achieves the highest success rate and motion smoothness across all four tasks. Although MPD also yields strong results, its smoothness relies on hand-specified ProDMP priors, whereas \motif{} requires no such prior. Furthermore, as shown in Fig.~\ref{fig:lapgym}(b), \motif{} converges faster than all baselines, which is a practically important advantage in time-constrained settings such as real-world RL.

\begin{table*}[t]
\centering
\caption{LapGym results.
LDLJ is reported as mean $\pm$ std. \textbf{Bold}: best per column.}
\label{tab:lapgym_3k}

\scriptsize
\setlength{\tabcolsep}{3pt}
\renewcommand{\arraystretch}{0.95}

\begin{tabular*}{\textwidth}{@{\extracolsep{\fill}}lcccccccc@{}}
\hline
& \multicolumn{2}{c}{\textsc{RT}}
& \multicolumn{2}{c}{\shortstack{\textsc{GLT}}}
& \multicolumn{2}{c}{\shortstack{\textsc{BTM}}}
& \multicolumn{2}{c}{\textsc{LL}} \\
\hline
Method
  & SR & LDLJ
  & SR & LDLJ
  & SR & LDLJ
  & SR & LDLJ \\
\hline
DP
  & 92 & $-10.88\pm1.61$
  & \textbf{100} & $-16.62\pm0.52$
  & 96 & $-6.05\pm2.18$
  & \textbf{100} & $-13.04\pm0.93$ \\
SFP
  & 72 & $-11.54\pm2.26$
  & 13 & $-17.09\pm1.44$
  & 95 & $-5.62\pm2.53$
  & 99 & $-12.46\pm1.33$ \\
MPD
  & 94 & $-9.38\pm2.12$
  & 99 & $-14.37\pm0.52$
  & \textbf{100} & $-3.81\pm1.92$
  & \textbf{100} & $-11.47\pm1.00$ \\
FreqPolicy
  & 89 & $-9.33\pm1.40$
  & \textbf{100} & $-15.52\pm0.55$
  & 83 & $-4.71\pm1.88$
  & \textbf{100} & $-12.49\pm0.90$ \\
FM
  & 89 & $-10.69\pm2.06$
  & \textbf{100} & $-16.62\pm0.60$
  & 94 & $-5.82\pm1.87$
  & \textbf{100} & $-13.23\pm0.87$ \\
\hline
\motif{}
  & \textbf{97} & $\mathbf{-7.57\pm1.32}$
  & \textbf{100} & $\mathbf{-14.31\pm0.58}$
  & 99 & $\mathbf{-2.21\pm2.34}$
  & \textbf{100} & $\mathbf{-11.28\pm1.17}$ \\
\hline
\end{tabular*}
\end{table*}

\begin{figure*}[t]
\centering
\vspace{-0.1in}
\includegraphics[width=\textwidth]{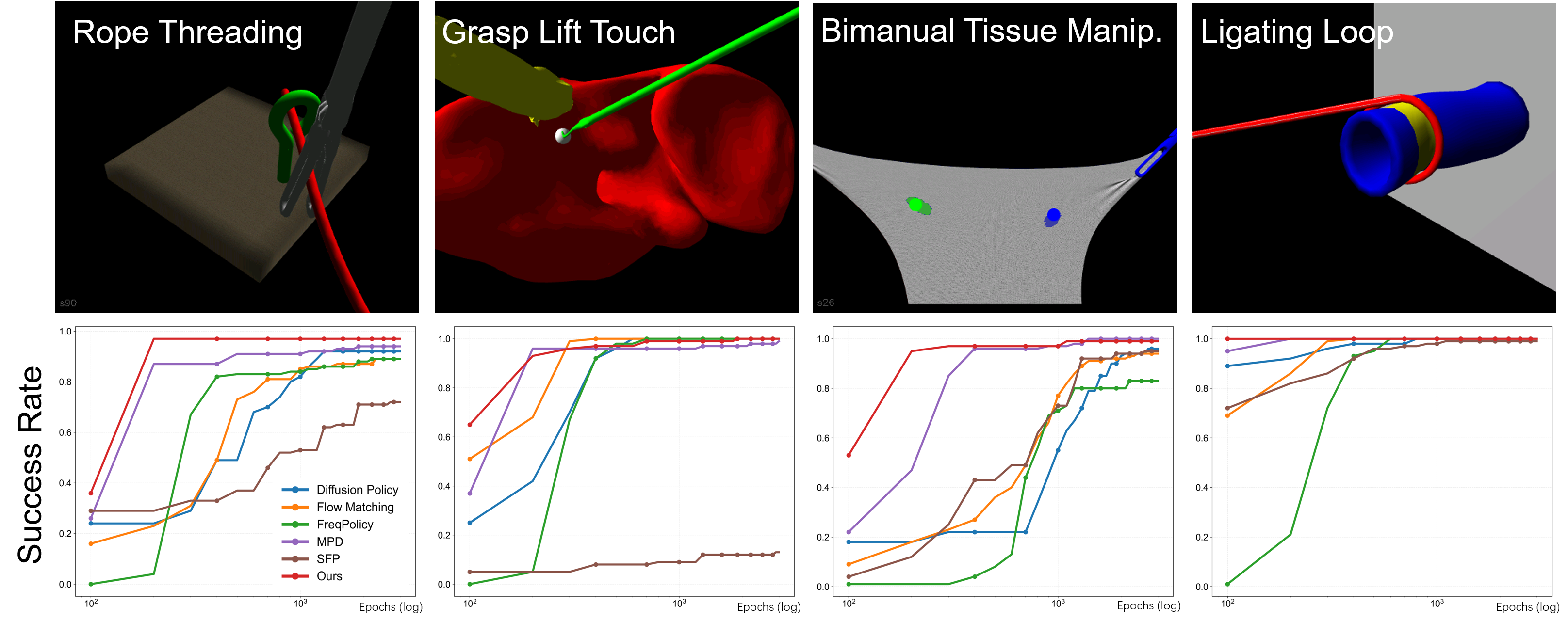}
\vspace{-0.2in}
\caption{(a) Overview of surgical manipulation tasks in LapGym. (b) Success rate on LapGym tasks as training proceeds. Our \motif{} converge to the best result earlier than baselines.}
\label{fig:lapgym}
\vspace{-0.2in}
\end{figure*}

\subsubsection{Ablation Study} 
\label{sec:ablation}

% Additionally, we evaluate the effectiveness of each component of \motif{} on rope threading. For ours w/o DCT, it achieves similar success rate and LDLJ comparing with flow matching policy, indicating frequency domain modelling contributes the most to the overall performance. Surprisingly, ours w/o DCT exhibit negligible improvement on LDLJ, indicating supervising first-order derivative by finite difference is not effective without continuous frequency domain parameterization. For ours w/o $\lvel$, it shows a moderate decrease in success rate and LDLJ, showing the effectiveness of supervising first-order derivative as $H^1_\mu$ Sobolev normalization.

We further evaluate the contribution of each component of \motif{} on the rope-threading task. Removing the DCT transform (ours w/o DCT) reduce success rates and LDLJ to flow-matching backbone, indicating that frequency-domain modelling is the dominant contributor to overall performance. Notably, ours w/o DCT also shows negligible improvement in LDLJ, suggesting that supervising the first-order derivative via finite differences is ineffective without the continuous frequency-domain parameterization. Removing the derivative supervision (ours w/o $lvel$) results in a moderate drop in both success rate and LDLJ, confirming the benefit of supervising the first-order derivative as $H^1_\mu$ Sobolev regularization.

% We ablate \motif{} on 's two components on rope threading: DCT parameterization and $\lvel$ (Figure~\ref{tab:ablation}). DCT alone improves SR from $89\%$ to $95\%$ by resolving the step-index identifiability failure; adding $\lvel$ further reduces jerk (LDLJ $-9.12\to-7.57$), consistent with Theorem~\ref{thm:h1_zihao}. Applying $\lvel$ without DCT yields negligible gain, confirming that velocity supervision requires the frequency-domain structure to be informative.

\begin{figure*}[t]
\centering
\begin{minipage}[t]{0.44\textwidth}
\centering
\includegraphics[width=\linewidth]{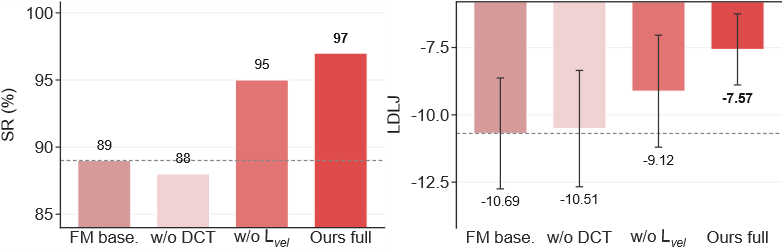}
\vspace{-0.2in}
\captionof{figure}{Ablation on rope threading.}
\label{tab:ablation}
\end{minipage}\hfill
\begin{minipage}[t]{0.55\textwidth}
\centering
\includegraphics[width=\linewidth]{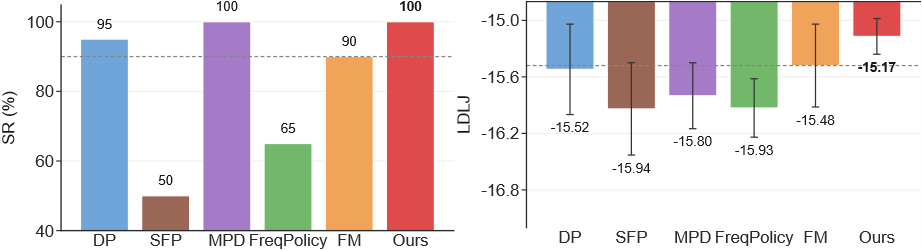}
\vspace{-0.2in}
\captionof{figure}{Real-robot policy pick\&place.}
\label{tab:real_small}
\end{minipage}
\end{figure*}

\subsubsection{Real-Robot Evaluation}
\label{sec:exp_real_small}

Finally, we deploy all methods on a Franka physical robot arm for pick-and-place, which requires picking a cup and place it on the plate. For all methods, we fine-tune it on 40 demos and evaluate the result over 20 episodes. As shown in Figure~\ref{tab:real_small}, While our \motif{} and MPD both achieves $100\%$ success rate, out FAFM achieve better motion smoothness measured by LDLJ. It is also worth noting that SFP drops to $50\%$ SR on the physical system, likely due to its marginal-only trajectory modeling being sensitive to real-world observation noise.

% We deploy all methods on a physical robot arm for a pick-and-place
% task.
% SPARC and LDLJ are computed from recorded end-effector trajectories
% (Table~\ref{tab:real_small}).

% \begin{table}[h]
% \centering
% \caption{Real-robot pick-and-place (small model).}
% \label{tab:real_small}
% \begin{tabular}{lccc}
% \toprule
% Method & SR (\%) & SPARC & LDLJ \\
% \midrule
% DP          & 95 & $-7.40$ & $-15.52$ \\
% SFP         & 50 & $-7.56$ & $-15.94$ \\
% MPD         & \textbf{100} & $-7.17$ & $-15.80$ \\
% FM          & 90 & $-8.02$ & $-15.48$ \\
% \midrule
% \motif{}    & 95 & \textbf{$-7.59$} & \textbf{$-15.39$} \\
% \bottomrule
% \end{tabular}
% \end{table}

% On the real robot, \motif{} matches the top task-success methods
% while achieving the best motion quality among methods with SR $\geq
% 90\%$.
% MPD achieves $100\%$ SR but at a cost of higher jerk ($-15.80$),
% consistent with its training objective being disconnected from
% velocity quality.
% SFP's SR drops to $50\%$ on the physical system, likely due to its
% marginal-only trajectory modeling being sensitive to observation
% noise at execution time.

%% ============================================================
%% 4.2  Large-Model Experiments (pi0 scale)
%% ============================================================
\subsection{VLA Scaling Experiments}
\label{sec:exp_large}

%% ============================================================
%% 4.2.1  LIBERO Robustness
%% ============================================================
\subsubsection{LIBERO Results}
\label{sec:exp_libero_robust}

\textbf{FAFM enhance smoothness and is robust against mechanical bias.} We evaluate the performance of \motif{} on LIBERO, a commonly used benchmark for VLAs, implementing our method on $\pi_0$ and $\pi_{0.5}$ backbone and treating the unmodified VLAs as baselines. As shown in Figure~\ref{tab:libero}, \motif{} achieves comparable success rates on both backbones while consistently improving motion smoothness. Under mechanical bias, \motif{} yields substantially higher success rates alongside improved smoothness on both backbones, confirming that our DCT-based action parameterization is robust to constant action bias.

\textbf{FAFM supports mixed-frequency input.} Training VLAs typically requires large datasets collected across diverse hardware platforms, which are often recorded at heterogeneous control frequencies. To evaluate robustness to this setting, we conduct a controlled experiment on the LIBERO drawer task using two dataset variants: a single-frequency dataset recorded at $10\,\mathrm{Hz}$ with $43$ demonstrations, and a mixed-frequency dataset recorded at $5$/$10$/$20\,\mathrm{Hz}$ with the same total of $43$ demonstrations. As shown in Table~\ref{tab:mixed_fps}, $\pi_0$ achieves $94\%$ success rate on the single-frequency data but collapses to $0\%$ on the mixed-frequency data despite identical demonstration count. In contrast, \motif{} maintains $92\%$ success rate under both conditions, demonstrating its robustness to mixed-frequency input.

% We evaluate on LIBERO under clean and mechanical bias conditions, comparing against $\pi_0$ and $\pi_{0.5}$ (Figure~\ref{tab:libero}). Under clean conditions, \motif{} achieves comparable success rates to both backbones while consistently improving motion smoothness, with LDLJ improving from $-14.48$ to $-14.11$ on $\pi_0$ and from $-14.46$ to $-13.80$ on $\pi_{0.5}$. Under mechanical bias, \motif{} substantially outperforms both backbones in success rate ($50.0\%$ vs.\ $23.0\%$ for $\pi_0$, and $66.2\%$ vs.\ $52.6\%$ for $\pi_{0.5}$) while further improving motion smoothness ($-14.01$ vs.\ $-14.28$ for $\pi_0$, and $-13.15$ vs.\ $-13.89$ for $\pi_{0.5}$), directly predicted by \textbf{Remark 3}. 
% We evaluate on LIBERO under three noise regimes: uniform perturbation, Gaussian perturbation, and constant mechanical bias, comparing against $\pi_0$ and RobustVLA~\citep{guo2026robustvla} (Figure~\ref{tab:libero}). \motif{} matches clean-condition performance within $1.2\%$ of $\pi_0$ and outperforms all baselines under bias ($50.0\%$ vs.\ $42.3\%$ for RobustVLA and $23.0\%$ for $\pi_0$), directly predicted by Corollary~\ref{cor:bias}: constant offsets perturb only the DC coefficient $\hat{c}_0$, leaving all shape-defining coefficients unaffected. RobustVLA's adversarial training advantage under Gaussian and uniform noise does not extend to mechanical bias, where structural DC isolation provides a qualitatively different protection mechanism.

\begin{figure*}[t]
\centering
\begin{minipage}[t]{0.6\textwidth}
\centering
\includegraphics[height=2.4cm]{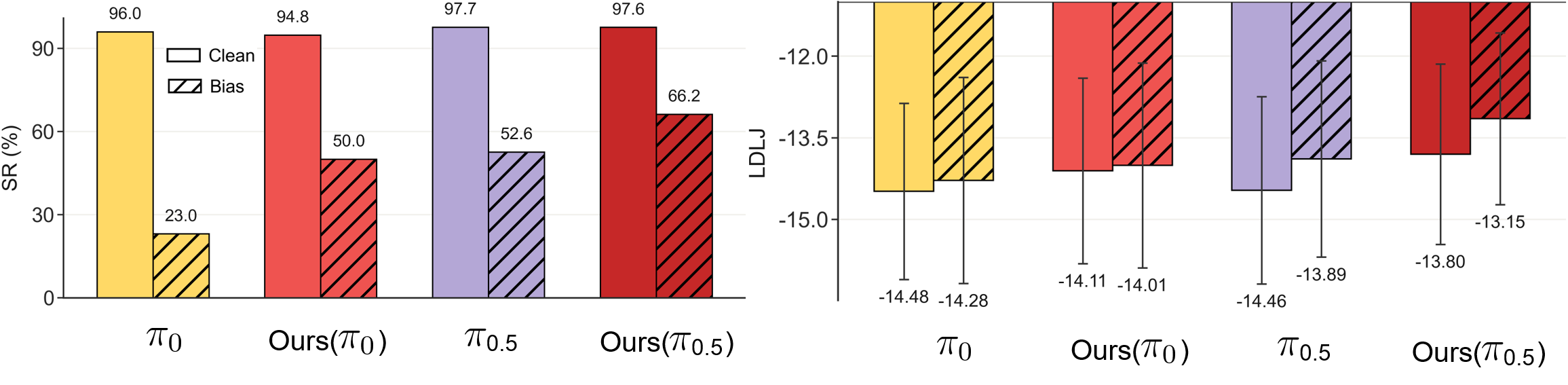}
\vspace{-0.2in}
\captionof{figure}{LIBERO results. ``Clean'' is the unperturbed baseline. Bias denotes mechanical offset.}
\vspace{-0.1in}
\label{tab:libero}
\end{minipage}
\vspace{-0.1in}
\hfill
\begin{minipage}[t]{0.3\textwidth}
\centering
\includegraphics[height=2.4cm]{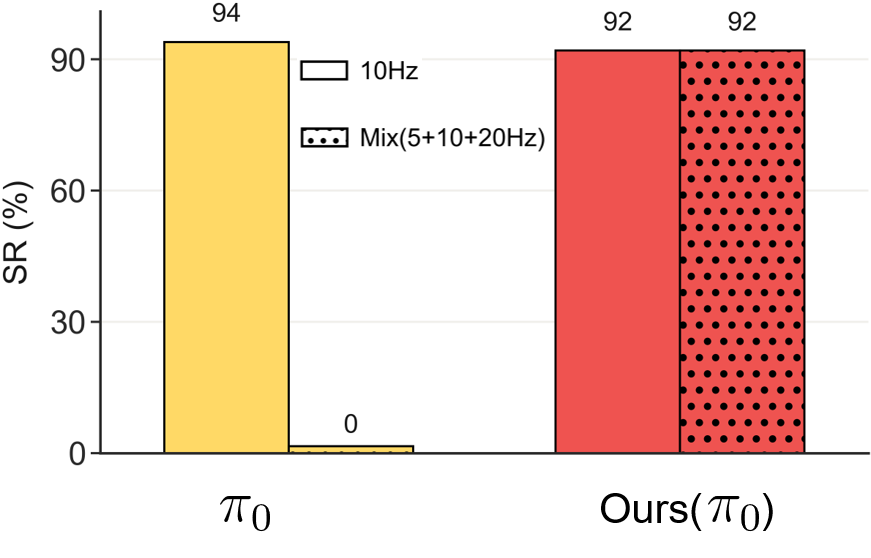}
\vspace{-0.1in}
\captionof{figure}{mixed-freq results.}
\label{tab:mixed_fps}
\end{minipage}
\end{figure*}

\subsection{Real-Robot Deployment with $\pi_{0.5}$}
\label{sec:exp_real_large}

\begin{figure*}[!t]
\centering
% \vspace{in}
\includegraphics[width=0.9\textwidth]{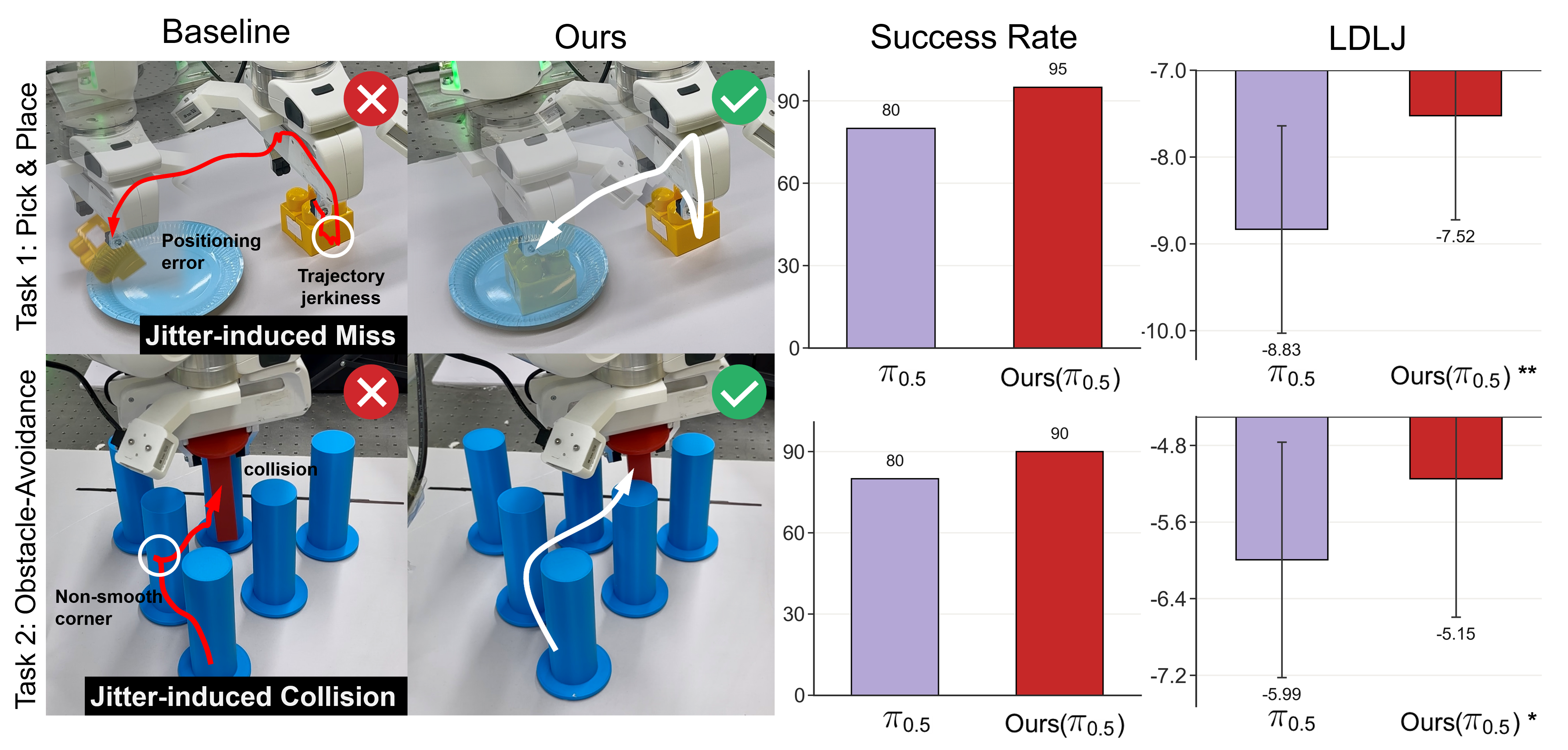}
\vspace{-0.05in}
\caption{Real-world deployment with the $\pi_{0.5}$ backbone. Our FAFM achieves higher success rate by improved motion smoothness (LDLJ), which avoids jitter-induced miss and collisions. For LDLJ, paired samples t-test shows $p<0.001$$ (^{**}$) for Task 1, $p<0.05$ ($^{*}$) for Task 2.}
\label{fig:realworld}
\vspace{-0.1in}
\end{figure*}

Finally, we evaluate the performance of our method on $\pi_{0.5}$ backbone. We include two tasks for comparison. The first one is pick-and-place with variable object positions and goal configurations, and the second is multi-obstacle avoidance with slightly perturbed obstacle positions and start points, see Fig.~\ref{fig:realworld} for illustrations. We collect 20 demos for pick-and-place and 60 demos for multi-obstacle avoidance, which is sufficient for their convergence, and evaluate each method and each task for 20 episodes. Our method achieves $+15\%$ success rate on pick-and-place and $+10\%$ success rate on obstacle avoidance comparing with $\pi_{0.5}$ backbone. The higher success rate comes from improved motion smoothness, which better avoids jitter-induced target missing and collisions in both tasks.

\section{Conclusion}
\label{sec:conclusion}

We presented FAFM, a frequency‑aware flow‑matching framework that enables continuous and temporally consistent action generation for robotic manipulation. By performing flow matching in frequency space via DCT and regularizing the first-order derivative, our method supports mixed-frequency training and provides smooth actions. Our FAFM is simple, free of additional learnable parameters, and applies to standalone flow matching policy and VLAs. Experimental results across synthetic toy benchmarks, obstacle avoidance, LapGym, and LIBERO demonstrate consistent improvements over strong baselines in success rates, multimodal expressivity, motion smoothness, convergence speed, robustness to mechanical bias and mixed-frequency input. These advantages are further validated on real‑world Franka robot. As for \textbf{limitations}, our approach is less effective for tasks that intrinsically require high‑frequency or impulsive actions due to the nature of DCT, such as sharp jumps or percussive interactions. However, the vast majority of real‑world robotic manipulation tasks are characterized by smooth, low‑frequency action trajectories suitable for our FAFM.

\newpage
%% ============================================================
\begin{ack}
Omitted for blind review.
\end{ack}

\bibliographystyle{unsrt}
\bibliography{neurips_2026}

\newpage
\appendix

% \section{Frequency-Invariance of DCT Coefficients}
% \label{app:freq_inv}
 
% \begin{proposition}[DCT frequency invariance]
% \label{prop:freq_inv}
% Let two demonstrations execute the same physical motion $\xi^*$ over
% $[0,T_\xi]$, sampled at $f_s\neq f_f$ with $K_s=\lfloor T_\xi f_s\rfloor$,
% $K_f=\lfloor T_\xi f_f\rfloor$.
% For all $j\leq\min(K_s,K_f)/2$:
% \[
%   \hat{c}_j^{(f_s)}=c_j^*+O(1/K_s),\qquad
%   \hat{c}_j^{(f_f)}=c_j^*+O(1/K_f).
% \]
% \end{proposition}
% \begin{proof}
% Eq.~\eqref{eq:dct2_forward} is a midpoint quadrature rule for $c_j^*$
% with step $\Delta\tau=1/f_\xi$.
% By the Euler--Maclaurin formula, the quadrature error is
% $O(\Delta\tau)=O(1/K)$ for Lipschitz $\xi^*$, and $O(1/K^2)$ for $C^2$
% trajectories.\quad$\square$
% \end{proof}
 
% \noindent With $f_\xi\geq 10\,\mathrm{Hz}$, $T_\xi$ fixed, and $M=16$,
% the Nyquist condition $M<K/2$ holds automatically and the $O(1/K)$
% error is numerically negligible.
 
%% ------------------------------------------------------------------
\section{Proof of Proposition~\ref{prop:identifiability}}
\label{app:proof_identifiability}

\begin{proof}
    % L2 Bayes optimum is the conditional mean
	Fix $(\mathbf{o}, k)$ and write $Y = \xi(k/f) \cdot \mathbf{1}\{k < K(\xi, f)\}$ for the regression target restricted to the event that token $k$ exists. The conditional expectation $\mathbb{E}[Y \mid \mathbf{o}, k]$ is the $L^2$-projection of $Y$ onto the $\sigma$-algebra generated by $(\mathbf{o}, k)$, so for any measurable predictor $\hat{a}(\mathbf{o}, k)$,
	\begin{equation}\notag
	\mathbb{E}\bigl[(\hat{a} - Y)^2 \mid \mathbf{o}, k\bigr]
	\;=\;
	\bigl(\hat{a}(\mathbf{o},k) - \mathbb{E}[Y \mid \mathbf{o}, k]\bigr)^{2}
	\;+\;
	\mathrm{Var}(Y \mid \mathbf{o}, k),
	\end{equation}
	in which only the first term depends on $\hat{a}$. Minimization gives $\hat{a}^{\star}(\mathbf{o}, k) = \mathbb{E}[Y \mid \mathbf{o}, k]$, which is \eqref{eq:bayes-optimum}.

    Under the non-degeneracy assumption, the integrand $\xi(k/f)$ in \eqref{eq:bayes-optimum} takes values $\xi(k/f_1), \xi(k/f_2), \ldots$ at \emph{different} physical times $k/f_1 \neq k/f_2$ across demonstrations. The expectation therefore averages evaluations of $\xi$ at \emph{distinct phases of the underlying motion}, not at the same phase of different motions. A convex combination of points sampled from distinct phases of even a single trajectory $\xi$ generally falls off the trajectory itself; a fortiori the same holds when the points are drawn from different trajectories at different phases.

\end{proof}

\begin{remark}
The predictor $\hat{a}(k \mid \mathbf{o})$ above depends on $(\mathbf{o}, k)$ but don't have the information about $f$. 
\end{remark}

\begin{remark}
The function $\hat{a}^{\star}$ in \eqref{eq:bayes-optimum} depends only on the conditioning $(\mathbf{o}, k)$ and the distribution $p_{\mathcal{D}}$; it is independent of training-set size, model capacity, and optimization procedure.Because the right-hand side of \eqref{eq:bayes-optimum} is a deterministic functional of $p_{\mathcal{D}}$ alone. Increasing the training set drives the empirical risk to its population infimum $\mathcal{J}(\hat{a}^{\star})$; increasing model capacity makes the empirical minimizer closer to $\hat{a}^{\star}$ in $L^{2}$; neither changes $\hat{a}^{\star}$ itself. The optimization procedure determines how closely the trained model approximates $\hat{a}^{\star}$, but cannot reach a target lower than $\mathcal{J}(\hat{a}^{\star})$.
\end{remark}

\section{Proof of Theorem~\ref{thm:h1_zihao}}
\label{app:proof_h1_zihao}

% We model the physical trajectory as an element of $\mathcal{H}:=H^1([0,T_\xi];\mathbb{R}^d) :=\{f\in L^2(0,T_\xi;\mathbb{R}^d)\mid f'\in L^2(0,T_\xi;\mathbb{R}^d)\}$ with inner product
% \begin{equation}
%     \langle f,g\rangle_{H^1}
%     \;=\;
%     \langle f,g\rangle_{L^2}
%     \;+\;
%     \langle f',g'\rangle_{L^2},
%     \label{eq:h1_inner}
% \end{equation}
% and expand in the $L^2$-orthonormal cosine basis $\phi_j(\tau)=\alpha_j\cos(\omega_j\tau)$,
% $\omega_j=j\pi/T_\xi$, $\alpha_0=\sqrt{1/T_\xi}$, $\alpha_j=\sqrt{2/T_\xi}$ for $j\geq 1$, which diagonalizes $-\partial_\tau^2$ under Neumann boundary conditions. The target trajectory admits the expansion $\xi^*=\sum_{j\geq 0}c_j^*\phi_j$ with $c_j^*=\langle\xi^*,\phi_j\rangle_{L^2}$.

Before proving Theorem~\ref{thm:h1_zihao}, we establish that the position-space implementation of $\lfm$ used in practice is gradient-equivalent to the coefficient-space form in Eq.~\eqref{eq:fm_motif}.
 
\begin{lemma}[Implementation equivalence]
\label{lem:impl_equiv}
Let $\hat{\mathbf{c}}_\theta \in \mathbb{R}^{M+1}$ be the coefficient-space prediction and $\hat{\xi}_\theta(\tau_n) = \sum_{j=0}^{M}\hat{c}_{\theta,j}\phi_j(\tau_n)$ its decoded trajectory at the centered grid $\tau_n=(n+\tfrac{1}{2})/f_\xi$. Let $\hat{\mathbf{c}}^*\in\mathbb{R}^{M+1}$ be the target coefficients and $\xi^*_n := \xi^*(\tau_n)$ the target chunk. Then
\begin{equation}
    \sum_{n=0}^{K-1}\bigl(\hat{\xi}_\theta(\tau_n)-\xi^*_n\bigr)^2
    \;=\;
    \frac{K}{T_\xi}\sum_{j=0}^{M}\bigl(\hat{c}_{\theta,j}-c_j^*\bigr)^2
    \;+\;
    R_M(\xi^*),
    \label{eq:impl_equiv}
\end{equation}
where $R_M(\xi^*)=\sum_{n=0}^{K-1}\bigl(\xi^*_n-(P_M\xi^*)(\tau_n)\bigr)^2$ depends only on the high-frequency tail $\{c_j^*\}_{j>M}$ of the target trajectory and not on $\theta$. 
\end{lemma}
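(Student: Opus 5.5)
The plan is to split the target chunk into its in-band and out-of-band parts on the centered grid, expand the square, and kill the cross term using discrete cosine orthogonality. All quantities are taken coordinatewise, so it suffices to treat $d=1$ and sum over coordinates at the end. I also assume $K=T_\xi f_\xi$ exactly, so that $\tau_n=(n+\tfrac12)T_\xi/K$ and $\omega_j\tau_n=\frac{j\pi(2n+1)}{2K}$. This is the same identity used after Eq.~\eqref{eq:dct2_forward}.

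\textbf{Step 1 (discrete orthogonality).} I will first verify that on the centered grid
\[
\sum_{n=0}^{K-1}\phi_j(\tau_n)\phi_k(\tau_n)=\frac{K}{T_\xi}\,\delta_{jk},\qquad 0\le j,k\le K-1 .
\]
This is the standard orthogonality of the DCT-II basis $\cos\bigl(\frac{j\pi(2n+1)}{2K}\bigr)$. The normalizations $\alpha_0^2=1/T_\xi$ and $\alpha_j^2=2/T_\xi$ exactly compensate the sums $K$ (for $j=0$) and $K/2$ (for $j\ge1$). The proof is the usual product-to-sum argument followed by a geometric sum of $e^{i\pi m(2n+1)/(2K)}$ over $n$, which vanishes for $0<|m|<2K$.

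\textbf{Step 2 (decomposition).} Write $\xi^*_n=(P_M\xi^*)(\tau_n)+r_n$ with $r_n:=\xi^*_n-(P_M\xi^*)(\tau_n)$. Set $\delta_j=\hat c_{\theta,j}-c^*_j$. Then
\[
\hat\xi_\theta(\tau_n)-\xi^*_n=\sum_{j\le M}\delta_j\phi_j(\tau_n)-r_n .
\]
Expanding the square gives three pieces:
\begin{itemize}
\item the diagonal term, which by Step 1 equals $\frac{K}{T_\xi}\sum_{j\le M}\delta_j^2$;
\item the cross term $-2\sum_{j\le M}\delta_j\sum_n r_n\phi_j(\tau_n)$;
\item the remainder $\sum_n r_n^2=R_M(\xi^*)$, which is independent of $\theta$ and determined by the tail of $\xi^*$.
\end{itemize}

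\textbf{Step 3 (cross term; main obstacle).} The cross term vanishes only if the sampled residual $(r_n)$ is discretely orthogonal to $\phi_0,\dots,\phi_M$. If $c^*_j$ means the continuous $L^2$ coefficient, this fails because of aliasing. For $k\ge K$, the samples $\phi_k(\tau_n)$ fold back onto a mode with index in $\{0,\dots,K-1\}$ (for example $2K-k$, up to sign). Consequently tail modes with $k\equiv \pm j \pmod{2K}$, $j\le M$, leak into the low band.

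I will handle this as the statement implicitly requires: identify $c^*_j$ with the centered-grid DCT coefficients $\frac{T_\xi}{K}\sum_n\xi^*_n\phi_j(\tau_n)$. These are the normalized $\hat c_j$ of Eq.~\eqref{eq:dct2_forward}, and they differ from the continuous coefficients only by the $O(1/K)$ of Eq.~\eqref{eq:dct_limit}. With this identification, $(\xi^*_n)$ expands exactly in the $K$ discrete modes. Then $r_n$ is exactly the component along modes $M+1,\dots,K-1$, so the cross term is identically zero by Step 1, and $R_M$ depends only on the tail $\{c^*_j\}_{j>M}$.

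I will also remark that for band-limited targets ($c^*_k=0$ for $k\ge 2K-M$) the continuous and discrete coefficients coincide, so the identity holds literally. Otherwise the discrepancy is an $O(1/K)$, $\theta$-independent shift of the target, which does not affect the gradient structure.

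\textbf{Step 4 (conclusion).} Collecting the three pieces gives Eq.~\eqref{eq:impl_equiv}. Since $R_M$ is constant in $\theta$, the position-space loss and $\frac{K}{T_\xi}\sum_j\delta_j^2$ have identical gradients. This is the gradient-equivalence invoked in Theorem~\ref{thm:h1_zihao}.
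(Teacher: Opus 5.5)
Your proof is correct and follows the same route as the paper: split the residual into its $V_M$ component $\sum_{j\le M}\delta_j\phi_j(\tau_n)$ and the truncation tail, then remove the cross term with the centered-grid orthogonality $\langle\phi_i,\phi_j\rangle_K=(K/T_\xi)\delta_{ij}$. Your Step~3 is more careful than the paper, which says the cross term vanishes against the whole tail $j>M$ even though that orthogonality only holds for indices up to $K-1$, so aliased modes $k\equiv\pm j \pmod{2K}$ do leak into the low band; the identity becomes exact only under your explicit assumption $K=T_\xi f_\xi$ together with either your band-limitation assumption or your identification of $c_j^*$ with the grid DCT coefficients (in the latter case $P_M\xi^*$ in $R_M$ must also be read as $\sum_{j\le M}c_j^*\phi_j$, not the $L^2$ projection).
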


\begin{proof}
Decompose the chunk-space residual at each sample point as
\begin{equation}
    \hat{\xi}_\theta(\tau_n) - \xi^*_n
    \;=\;
    \underbrace{\hat{\xi}_\theta(\tau_n) - (P_M\xi^*)(\tau_n)}_{\in V_M}
    \;+\;
    \underbrace{(P_M\xi^*)(\tau_n) - \xi^*_n}_{\perp V_M\text{ in }L^2},
    \label{eq:residual_split}
\end{equation}
where the first term is in $V_M$ and equals $\sum_{j=0}^{M}\delta_j\phi_j(\tau_n)$, and the second is the truncation tail $-\sum_{j>M}c_j^*\phi_j(\tau_n)$. On the centered grid, the discrete inner product $\langle f,g\rangle_K := \sum_{n=0}^{K-1}f(\tau_n)g(\tau_n)$ satisfies the discrete orthogonality $\langle\phi_i,\phi_j\rangle_K=(K/T_\xi)\boldsymbol{\delta_{ij}}$ for $0\leq i,j\leq K-1$ ($\boldsymbol{\delta_{ij}}$ is the Kronecker symbol), so squaring~\eqref{eq:residual_split} and summing over $n$ gives
\begin{equation}
    \sum_{n=0}^{K-1}\bigl(\hat{\xi}_\theta(\tau_n)-\xi^*_n\bigr)^2
    \;=\;
    \frac{K}{T_\xi}\sum_{j=0}^{M}\delta_j^2
    \;+\;
    R_M(\xi^*),
    \label{eq:impl_equiv_expanded_orn}
\end{equation}
where the cross term vanishes by orthogonality between modes $j\leq M$ (in-$V_M$ part) and modes $j>M$ (truncation tail), and $R_M(\xi^*)=\sum_{n=0}^{K-1}\bigl(\xi^*_n-(P_M\xi^*)(\tau_n)\bigr)^2$ is $\theta$-independent. 
\end{proof}

From Lemma \ref{lem:impl_equiv}, it can be noticed that the left side of equation \ref{eq:impl_equiv} is the chunk space training loss $\mathcal{L}^{\mathrm{chunk}}(\theta)$, and the coefficient-space training loss can be written as $\mathcal{L}^{\mathrm{coef}}(\theta)\;:=\;\sum_{j=0}^{M}\delta_j^2$. So equation \ref{eq:impl_equiv} can also be written as:

\begin{equation}
    \mathcal{L}^{\mathrm{chunk}}(\theta)
    \;=\;
    \frac{K}{T_\xi}\sum_{j=0}^{M}\delta_j^2
    \;+\;
    R_M(\xi^*)
    \;=\;
    \frac{K}{T_\xi}\,\mathcal{L}^{\mathrm{coef}}(\theta)
    \;+\;
    R_M(\xi^*),
    \label{eq:impl_equiv_expanded}
\end{equation}

Consequently, since $R_M(\xi^*)$ is $\theta$-independent and $K/T_\xi>0$, the two losses share the same minimizer and the same gradient direction in $\hat{\mathbf{c}}_\theta$.

Here we formally start to proof Theorem~\ref{thm:h1_zihao}.
\begin{proof}
From Lemma \ref{lem:impl_equiv} we have Equ.\ref{eq:impl_equiv_expanded}, with $R_M(\xi^*)$ independ on $\theta$. We adopt the convention that the flow-matching  loss is the coefficient-space form  $\mathcal{L}_{\mathrm{FM}}(\theta):=\mathcal{L}^{\mathrm{coef}}(\theta)$, which is equivalent to the chunk-space form up to the $\theta$-independent affine transformation in~Equ.\ref{eq:impl_equiv_expanded}. Thus
\begin{equation}
    \lfm(\theta) \;=\; \sum_{j=0}^{M}\delta_j^2,
    \label{eq:lfm_to_l2}
\end{equation}
and the same $\theta$ minimizes both $\mathcal{L}_{\mathrm{FM}}$ and $\mathcal{L}^{\mathrm{chunk}}$.

The model velocity is the analytic derivative $\hat{\dot{\xi}}_\theta(\tau)=-\sum_{j=1}^{M}\hat{c}_{\theta,j}\alpha_j\omega_j\sin(\omega_j\tau)$. Under the finite-difference target convention, the demonstration  velocity admits the matching expansion  $\dot{\xi}^*(\tau_n)= -\sum_{j=1}^{M}c_j^*\alpha_j\omega_j\sin(\omega_j\tau_n)+r_n$, where $r_n$ collects the truncation tail $j>M$ and the finite-difference discretization error. $r_n$ is  $\langle\cdot,\cdot\rangle_K$-orthogonal to  $\{\sin(\omega_j\tau_n)\}_{j=1}^{M}$ up to $O(1/K^2)$ and contributes only a $\theta$-independent additive constant to the loss, which we absorb into a remainder $R_M^{\mathrm{vel}}(\xi^*)$ analogous to $R_M(\xi^*)$ in Lemma ~\ref{lem:impl_equiv}.

The velocity-domain MSE then expands as
\begin{align}
    \sum_{n=0}^{K-1}\bigl(\hat{\dot{\xi}}_\theta(\tau_n)
    -\dot{\xi}^*(\tau_n)\bigr)^2
    &=\sum_{n=0}^{K-1}\Bigl(\sum_{j=1}^{M}
    \delta_j\alpha_j\omega_j\sin(\omega_j\tau_n)\Bigr)^{\!2}
    \;+\; R_M^{\mathrm{vel}}(\xi^*)
    \notag\\
    &=
    \sum_{i,j=1}^{M}\delta_i\delta_j\,
    \alpha_i\alpha_j\,\omega_i\omega_j
    \sum_{n=0}^{K-1}\sin(\omega_i\tau_n)\sin(\omega_j\tau_n)
    \;+\; R_M^{\mathrm{vel}}
    \notag\\
    &=
    \frac{K}{T_\xi}\sum_{j=1}^{M}\omega_j^2\,\delta_j^2
    \;+\; R_M^{\mathrm{vel}},
    \label{eq:lvel_diag}
\end{align}
Note that here by DST orthogonality, $\sum_n sin(\omega_i\cdot\tau_n)sin(\omega_j\cdot\tau_n)=(K/2)\boldsymbol{\delta_{ij}}$,   $\alpha_j^2=2/T_\xi$, which gives $\alpha_j^2\cdot K/2=K/T_\xi$ for each diagonal term.
We define the coefficient-space velocity loss 
\begin{equation}
    \lvel(\theta) \;:=\; \frac{T_\xi}{K}
    \sum_{n=0}^{K-1}\bigl(\hat{\dot{\xi}}_\theta(\tau_n)
    -\dot{\xi}^*(\tau_n)\bigr)^2
    \;-\; \frac{T_\xi}{K}R_M^{\mathrm{vel}}
    \;=\;\sum_{j=1}^{M}\omega_j^2\,\delta_j^2,
    \label{eq:lvel_to_h1}
\end{equation}
which is equivalent to the implementation-level velocity MSE up to a global rescaling and a $\theta$-independent constant.

So we have 
\begin{equation}
  \mathcal{L}_{\motif{}}(\theta)
  \;=\; \lfm(\theta) + \lambda\,\lvel(\theta)
  \;=\; \sum_{j=0}^{M}\bigl(1+\lambda\,\omega_j^2\bigr)\,\delta_j^2.
\end{equation}

\end{proof}

\section{Higher-Order Dynamics Correction Efficiency}
\label{app:correction_efficiency}
Given a per-mode loss weight $w_j$, the gradient pressure on mode $j$ is $2w_j\delta_j$. The \emph{correction efficiency at order $s$} is the ratio between gradient pressure and the resulting $s$-th-derivative error amplitude:
\begin{equation}
	\eta_j^{(s)} \;:=\; \frac{w_j}{A_j^{(s)}}
	\;=\; \frac{w_j}{\omega_j^{s}}.
\end{equation}
A larger $\eta_j^{(s)}$ means stronger gradient pressure per unit of $s$-th-derivative error---the loss is more effective at correcting errors of dynamic order $s$ at frequency $\omega_j$. Comparing standard chunk-space FM ($w_j^{\mathrm{FM}}=1$) with \motif{} ($w_j^{\motif{}}=\mu_j=1+\lambda\omega_j^{2}$):
\begin{center}
	\renewcommand{\arraystretch}{1.25}
	\begin{tabular}{lcccc}
		\toprule
		& Position ($s{=}0$) & Velocity ($s{=}1$) & Acceleration ($s{=}2$) & Jerk ($s{=}3$) \\
		\midrule
		Amplification $A_j^{(s)}$ & $1$ & $\omega_j$ & $\omega_j^{2}$ & $\omega_j^{3}$ \\
		\midrule
		$\eta_j^{(s),\,\mathrm{FM}}$ & $1$ & $\propto 1/j$ & $\propto 1/j^{2}$ & $\propto 1/j^{3}$ \\
		$\eta_j^{(s),\,\motif{}}$ & $1+\lambda\omega_j^{2}$ & $\propto j$ & $\to\lambda$ & $\propto 1/j$ \\
		\midrule
		High-freq.\ ratio & $O(j^{2})$ & $O(j^{2})$ & $O(j^{2})$ & $O(j^{2})$ \\
		\bottomrule
	\end{tabular}
\end{center}
Standard FM exhibits $\eta_j^{(s)}\propto 1/j^{s}$, decaying at every order, while \motif{} replaces this decay with the high-frequency behaviors shown above. The result is a uniform $O(j^{2})$ improvement over standard FM at every derivative order, with two qualitative consequences:

\textbf{\emph{Velocity ($s{=}1$):}} \motif{} gives \emph{growing} correction efficiency $\propto j$, while standard FM \emph{decays} as $1/j$. High-frequency velocity errors are corrected aggressively under \motif{} and largely ignored under standard FM.

\textbf{\emph{Acceleration ($s{=}2$):}} \motif{}'s efficiency is asymptotically uniform, $\eta_j^{(2),\,\motif{}}\to\lambda$ as $j\to\infty$. This uniformity is not a tuned property but a direct consequence of the $H^{1}$ norm: the derivative weight $\omega_j^{2}$ in $\mu_j$ exactly matches the acceleration amplification $A_j^{(2)}=\omega_j^{2}$. No separate hyperparameter is responsible for this match.

The construction extends to higher derivative orders by analogous supervision. This stronger supervision on high-frequency coefficient errors directly suppresses high-frequency artifacts such as action jitter.
%% ------------------------------------------------------------------
\section{$H^\infty$ Natural Regularity}
\label{app:sobolev_reg}
 
\begin{proposition}[$\hat{v}\in H^\infty$]
\label{prop:hinf}
Any trajectory decoded by Eq.~\eqref{eq:dct_continuous} with finite $M$
satisfies $\hat{v}\in H^\infty[0,T_\xi]=\bigcap_{s\geq 0}H^s[0,T_\xi]$:
\[
  \|\hat{v}\|_{H^s}^2
  = \frac{T_\xi}{2}\sum_{j=1}^{M}
      \bigl(1+\omega_j^2+\cdots+\omega_j^{2s}\bigr)|\hat{c}_j|^2
  < \infty,
  \quad\forall\,s\geq 0.
\]
\end{proposition}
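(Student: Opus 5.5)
The plan is a direct computation. The decoded trajectory $\hat{v}$ in Eq.~\eqref{eq:dct_continuous} is a \emph{finite} linear combination of the smooth functions $1$ and $\cos(\omega_j\tau)$, $j=1,\dots,M$. So $\hat{v}\in C^\infty[0,T_\xi]$, and each of its derivatives is again a finite trigonometric sum. Membership in $H^s$ for every integer $s\ge 0$ then reduces to the finiteness of finitely many $L^2$ integrals of bounded functions on a bounded interval. Hence $\hat{v}\in\bigcap_{s\ge0}H^s=H^\infty$. For non-integer $s$, I would use the nesting $H^{\lceil s\rceil}\subset H^s$, or equivalently define the fractional norm spectrally with weights $(1+\omega_j^2)^s$, which is again a finite sum.

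For the explicit norm I take the standard integer-order definition $\|f\|_{H^s}^2=\sum_{k=0}^{s}\|\partial_\tau^k f\|_{L^2}^2$, consistent with Eq.~\eqref{eq:h1_inner} at $s=1$. Differentiating termwise, for $k\ge1$ one gets $\partial_\tau^k\hat v=\sum_{j=1}^M \hat c_j\,\omega_j^k\,\psi_{j,k}(\tau)$, where $\psi_{j,k}\in\{\pm\cos(\omega_j\tau),\pm\sin(\omega_j\tau)\}$. The type of $\psi_{j,k}$ (cosine or sine) depends only on the parity of $k$, not on $j$. Since $\omega_j=j\pi/T_\xi$, both families satisfy continuous orthogonality on $[0,T_\xi]$:
\begin{equation*}
\int_0^{T_\xi}\cos(\omega_i\tau)\cos(\omega_j\tau)\,d\tau=\int_0^{T_\xi}\sin(\omega_i\tau)\sin(\omega_j\tau)\,d\tau=\tfrac{T_\xi}{2}\,\boldsymbol{\delta_{ij}}\quad(i,j\ge1).
\end{equation*}
These are the continuous analogues of the DCT/DST orthogonality used in Lemma~\ref{lem:impl_equiv} and Theorem~\ref{thm:h1_zihao}. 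They give $\|\partial_\tau^k\hat v\|_{L^2}^2=\tfrac{T_\xi}{2}\sum_{j=1}^M\omega_j^{2k}|\hat c_j|^2$ for $k\ge1$. Summing over $k=0,\dots,s$ yields the bracket $1+\omega_j^2+\cdots+\omega_j^{2s}$.

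The only delicate step is the $k=0$ term and the DC coefficient. Since $\int_0^{T_\xi}\cos(\omega_j\tau)\,d\tau=0$ for $j\ge1$, the constant $\tfrac12\hat c_0$ is $L^2$-orthogonal to the other modes. It contributes an additional $\tfrac{T_\xi}{4}|\hat c_0|^2$ to $\|\hat v\|_{L^2}^2$ only, and nothing to any derivative. I would therefore either state the identity for the mean-zero part $\hat v-\tfrac12\hat c_0$ (equivalently, $\hat c_0=0$), which is exactly the displayed formula, or append the term $\tfrac{T_\xi}{4}|\hat c_0|^2$. Either way finiteness is unaffected, because every sum has $M<\infty$ terms. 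This bookkeeping of the DC term, together with the convention for fractional $s$, is where I expect the main care to be needed; the rest is routine.
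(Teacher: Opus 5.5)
Your proof is correct and shares the paper's core reason: everything is finite because $M<\infty$. The routes differ, though. The paper's proof is the single line ``$M$, $\omega_j$, $|\hat c_j|$ are finite; a finite sum of finite terms is finite.'' It takes the displayed norm identity for granted and only checks that its right-hand side is finite. You do two things instead. First, you obtain $\hat v\in H^\infty$ without any formula, from $\hat v\in C^\infty[0,T_\xi]$ on a bounded interval, with $H^{\lceil s\rceil}\subset H^s$ covering non-integer $s$. Second, you actually derive the norm using continuous cosine/sine orthogonality with $\omega_j=j\pi/T_\xi$.

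That derivation shows the displayed equality is not quite right as stated. The constant term $\tfrac12\hat c_0$ contributes $\tfrac{T_\xi}{4}|\hat c_0|^2$ to $\|\hat v\|_{L^2}^2$ and nothing to any derivative. So for integer $s$ the correct identity is
\[
\|\hat v\|_{H^s}^2=\tfrac{T_\xi}{4}|\hat c_0|^2+\tfrac{T_\xi}{2}\sum_{j=1}^{M}\bigl(1+\omega_j^2+\cdots+\omega_j^{2s}\bigr)|\hat c_j|^2 .
\]
The paper's version holds only when $\hat c_0=0$; otherwise it is merely a lower bound on the norm. Finiteness of the paper's right-hand side therefore does not by itself bound the norm. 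The same one-line finiteness argument does apply verbatim to the corrected sum, so the conclusion $\hat v\in H^\infty$ is unaffected.

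You also correctly note that the bracket $1+\omega_j^2+\cdots+\omega_j^{2s}$ only makes sense for integer $s$. The nesting argument is what justifies the ``$\forall\,s\ge 0$'' in the statement. In short, the paper's approach buys brevity, and yours buys a verified and corrected norm identity plus a formula-independent proof of membership.
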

\begin{proof}
$M$, $\omega_j$, and $|\hat{c}_j|$ are finite; a finite sum of finite
terms is finite for any $s$.\quad$\square$
\end{proof}
 
\noindent Standard chunk FM outputs $[a_0,\ldots,a_{K-1}]$ are discrete
sequences; Sobolev regularity is not defined for them.
Theorem~\ref{thm:h1_zihao} and Proposition~\ref{prop:hinf} together show that
\motif{} operates in a Sobolev-regular output space ($H^\infty$, by
architecture) and optimizes a Sobolev-consistent objective ($H^1$, by
training).

\section{Convergence acceleration}
\label{app:convergence}
For smooth physical trajectories, energy compaction implies
$|c_j^*| \lesssim C/j^2$.
Under standard $L^2$ flow matching, the per-mode gradient
$2\delta_j$ vanishes for large $j$, causing slow convergence for
fine dynamic details.
By Theorem~\ref{thm:h1_zihao}, the weight $\mu_j\sim O(j^2)$
acts as a spectral preconditioner that exactly counteracts this
$O(1/j^2)$ decay, equalizing gradient pressure across all
frequency bands and preventing high-frequency stagnation.
This predicts the accelerated convergence observed experimentally.

%% ------------------------------------------------------------------
\section{Robustness to Embodiment Mechanical Bias}
\label{app:bias_robustness}

With the DCT-II convention of Eq.~\eqref{eq:dct2_forward}, a constant offset $b$ applied to the executed sequence propagates to the DCT coefficients as
\[
  \delta\hat{c}_j\big|_\mathrm{mech}
  = 2b\sum_{n=0}^{K-1}\phi_j(n)\phi_0(n)
  = 2Kb\cdot\delta_{j0},
\]
where $\phi_0(n)\equiv 1$ and the last step follows from discrete cosine orthogonality. Only $\hat{c}_0$ (the DC coefficient, controlling mean position) is perturbed; all $j\geq 1$ shape-defining coefficients are \emph{exactly} unaffected. Under i.i.d.\ time-varying noise, by contrast, $\mathrm{Var}(\delta\hat{c}_j)=2K\sigma^2$ uniformly for all $j$, so no such isolation exists.

Under chunk FM, the same bias $b$ shifts action steps directly: $\delta a_k = b$ for $k=0,\ldots,K-1$. The perturbation spans the $K$-dimensional action space, identical in dimensionality to that induced by time-varying noise.

\section{Training Settings}
\label{app:training_setting}

\textbf{Standalone-policy (small-model) setting.}
Table~\ref{tab:training_setting_small} summarizes the key optimization and inference settings used in our standalone-policy experiments unless otherwise specified. Across these policy experiments, all methods use the same Transformer backbone with $6$ layers, $4$ attention heads, and embedding dimension $256$. For \motif{}, we fix the velocity regularization weight to $\lambda=1$ throughout, consistent with Eq.~\eqref{eq:total}.

\begin{table}[h]
\centering
\caption{Key optimization and inference settings for the standalone-policy
(small-model) experiments.}
\label{tab:training_setting_small}
\setlength{\tabcolsep}{8pt}
\renewcommand{\arraystretch}{1.05}
\begin{tabular*}{0.5\linewidth}{@{\extracolsep{\fill}}ll@{}}
\toprule
Setting & Value \\
\midrule
Backbone & Transformer \\
Layers & 6 \\
Attention heads & 4 \\
Embedding dimension & 256 \\
\midrule
Epochs & 3,000 (500 in toy) \\
Batch size & 256 \\
Optimizer & AdamW \\
Learning rate & $1.0\times 10^{-4}$ \\
Betas & $(0.95, 0.999)$ \\
Epsilon & $1.0\times 10^{-8}$ \\
Weight decay & $1.0\times 10^{-3}$ \\
\midrule
$\lambda$ & 1 \\
Inference sampling steps & 10 \\
Chunk horizon & 12\\
\bottomrule
\end{tabular*}
\end{table}

\textbf{Inference-step budget.}
For methods that rely on iterative sampling, we use $10$ sampling steps at inference for all compared methods. This value is slightly larger than the original setting used by MPD. We adopt the unified budget of $10$ because smaller step budgets lead to noticeably degraded performance for several methods, making cross-method comparison less stable.

\textbf{VLA setting.}
Table~\ref{tab:training_setting_vla} summarizes the common training settings used for the $\pi_0$ and $\pi_{0.5}$ experiments. In both cases, we use our \motif{} to train the action expert, and report the fine-tuning strategies for the VLM and action expert separately. And we set DCT coefficients $M=16$, because the chunk horizon is 50.

\begin{table}[h]
\centering
\caption{Key training settings for the VLA experiments with $\pi_0$ and
$\pi_{0.5}$.}
\label{tab:training_setting_vla}
\setlength{\tabcolsep}{8pt}
\renewcommand{\arraystretch}{1.05}
\begin{tabular*}{0.5\linewidth}{@{\extracolsep{\fill}}ll@{}}
\toprule
Setting & Value \\
\midrule
Batch size & 32 \\
Training steps & 30,000 \\
VLM & Full-parameter \\
Action expert & Full-parameter \\
Chunk horizon & 50 \\
\bottomrule
\end{tabular*}
\end{table}

\textbf{DCT coefficients.}We set the number of retained DCT coefficients to $M \approx K/3 $, where $K$ is the action chunk horizon. Specifically, $M=4$ for the standalone-policy experiments ($K=12$) and $M=16$ for the VLA experiments ($K=50$). This choice retains the dominant low-frequency components of manipulation trajectories while suppressing high-frequency noise. This choice is consistent with FAST~\citep{pertsch2025fast}, which applies DCT to robot action tokenization and reports that the analogous compression hyperparameter is insensitive. 
\newpage
\section*{NeurIPS Paper Checklist}

\begin{enumerate}

\item {\bf Claims}
    \item[] Question: Do the main claims made in the abstract and introduction accurately reflect the paper's contributions and scope?
    \item[] Answer: \answerYes{} % Replace by \answerYes{}, \answerNo{}, or \answerNA{}.
    \item[] Justification: By methods and experiments.
    \item[] Guidelines:
    \begin{itemize}
        \item The answer \answerNA{} means that the abstract and introduction do not include the claims made in the paper.
        \item The abstract and/or introduction should clearly state the claims made, including the contributions made in the paper and important assumptions and limitations. A \answerNo{} or \answerNA{} answer to this question will not be perceived well by the reviewers. 
        \item The claims made should match theoretical and experimental results, and reflect how much the results can be expected to generalize to other settings. 
        \item It is fine to include aspirational goals as motivation as long as it is clear that these goals are not attained by the paper. 
    \end{itemize}

\item {\bf Limitations}
    \item[] Question: Does the paper discuss the limitations of the work performed by the authors?
    \item[] Answer: \answerYes{} % Replace by \answerYes{}, \answerNo{}, or \answerNA{}.
    \item[] Justification: Addressed in conclusions.
    \item[] Guidelines:
    \begin{itemize}
        \item The answer \answerNA{} means that the paper has no limitation while the answer \answerNo{} means that the paper has limitations, but those are not discussed in the paper. 
        \item The authors are encouraged to create a separate ``Limitations'' section in their paper.
        \item The paper should point out any strong assumptions and how robust the results are to violations of these assumptions (e.g., independence assumptions, noiseless settings, model well-specification, asymptotic approximations only holding locally). The authors should reflect on how these assumptions might be violated in practice and what the implications would be.
        \item The authors should reflect on the scope of the claims made, e.g., if the approach was only tested on a few datasets or with a few runs. In general, empirical results often depend on implicit assumptions, which should be articulated.
        \item The authors should reflect on the factors that influence the performance of the approach. For example, a facial recognition algorithm may perform poorly when image resolution is low or images are taken in low lighting. Or a speech-to-text system might not be used reliably to provide closed captions for online lectures because it fails to handle technical jargon.
        \item The authors should discuss the computational efficiency of the proposed algorithms and how they scale with dataset size.
        \item If applicable, the authors should discuss possible limitations of their approach to address problems of privacy and fairness.
        \item While the authors might fear that complete honesty about limitations might be used by reviewers as grounds for rejection, a worse outcome might be that reviewers discover limitations that aren't acknowledged in the paper. The authors should use their best judgment and recognize that individual actions in favor of transparency play an important role in developing norms that preserve the integrity of the community. Reviewers will be specifically instructed to not penalize honesty concerning limitations.
    \end{itemize}

\item {\bf Theory assumptions and proofs}
    \item[] Question: For each theoretical result, does the paper provide the full set of assumptions and a complete (and correct) proof?
    \item[] Answer: \answerYes{}
    \item[] Justification: We have included assumptions and proofs.
    \item[] Guidelines:
    \begin{itemize}
        \item The answer \answerNA{} means that the paper does not include theoretical results. 
        \item All the theorems, formulas, and proofs in the paper should be numbered and cross-referenced.
        \item All assumptions should be clearly stated or referenced in the statement of any theorems.
        \item The proofs can either appear in the main paper or the supplemental material, but if they appear in the supplemental material, the authors are encouraged to provide a short proof sketch to provide intuition. 
        \item Inversely, any informal proof provided in the core of the paper should be complemented by formal proofs provided in appendix or supplemental material.
        \item Theorems and Lemmas that the proof relies upon should be properly referenced. 
    \end{itemize}

    \item {\bf Experimental result reproducibility}
    \item[] Question: Does the paper fully disclose all the information needed to reproduce the main experimental results of the paper to the extent that it affects the main claims and/or conclusions of the paper (regardless of whether the code and data are provided or not)?
    \item[] Answer: \answerYes{}
    \item[] Justification: Code and details given.
    \item[] Guidelines:
    \begin{itemize}
        \item The answer \answerNA{} means that the paper does not include experiments.
        \item If the paper includes experiments, a \answerNo{} answer to this question will not be perceived well by the reviewers: Making the paper reproducible is important, regardless of whether the code and data are provided or not.
        \item If the contribution is a dataset and\slash or model, the authors should describe the steps taken to make their results reproducible or verifiable. 
        \item Depending on the contribution, reproducibility can be accomplished in various ways. For example, if the contribution is a novel architecture, describing the architecture fully might suffice, or if the contribution is a specific model and empirical evaluation, it may be necessary to either make it possible for others to replicate the model with the same dataset, or provide access to the model. In general. releasing code and data is often one good way to accomplish this, but reproducibility can also be provided via detailed instructions for how to replicate the results, access to a hosted model (e.g., in the case of a large language model), releasing of a model checkpoint, or other means that are appropriate to the research performed.
        \item While NeurIPS does not require releasing code, the conference does require all submissions to provide some reasonable avenue for reproducibility, which may depend on the nature of the contribution. For example
        \begin{enumerate}
            \item If the contribution is primarily a new algorithm, the paper should make it clear how to reproduce that algorithm.
            \item If the contribution is primarily a new model architecture, the paper should describe the architecture clearly and fully.
            \item If the contribution is a new model (e.g., a large language model), then there should either be a way to access this model for reproducing the results or a way to reproduce the model (e.g., with an open-source dataset or instructions for how to construct the dataset).
            \item We recognize that reproducibility may be tricky in some cases, in which case authors are welcome to describe the particular way they provide for reproducibility. In the case of closed-source models, it may be that access to the model is limited in some way (e.g., to registered users), but it should be possible for other researchers to have some path to reproducing or verifying the results.
        \end{enumerate}
    \end{itemize}

\item {\bf Open access to data and code}
    \item[] Question: Does the paper provide open access to the data and code, with sufficient instructions to faithfully reproduce the main experimental results, as described in supplemental material?
    \item[] Answer: \answerYes{}
    \item[] Justification: Code and details given.
    \item[] Guidelines:
    \begin{itemize}
        \item The answer \answerNA{} means that paper does not include experiments requiring code.
        \item Please see the NeurIPS code and data submission guidelines (\url{https://neurips.cc/public/guides/CodeSubmissionPolicy}) for more details.
        \item While we encourage the release of code and data, we understand that this might not be possible, so \answerNo{} is an acceptable answer. Papers cannot be rejected simply for not including code, unless this is central to the contribution (e.g., for a new open-source benchmark).
        \item The instructions should contain the exact command and environment needed to run to reproduce the results. See the NeurIPS code and data submission guidelines (\url{https://neurips.cc/public/guides/CodeSubmissionPolicy}) for more details.
        \item The authors should provide instructions on data access and preparation, including how to access the raw data, preprocessed data, intermediate data, and generated data, etc.
        \item The authors should provide scripts to reproduce all experimental results for the new proposed method and baselines. If only a subset of experiments are reproducible, they should state which ones are omitted from the script and why.
        \item At submission time, to preserve anonymity, the authors should release anonymized versions (if applicable).
        \item Providing as much information as possible in supplemental material (appended to the paper) is recommended, but including URLs to data and code is permitted.
    \end{itemize}

\item {\bf Experimental setting/details}
    \item[] Question: Does the paper specify all the training and test details (e.g., data splits, hyperparameters, how they were chosen, type of optimizer) necessary to understand the results?
    \item[] Answer: \answerYes{}
    \item[] Justification: Yes, in Appendix and code.
    \item[] Guidelines:
    \begin{itemize}
        \item The answer \answerNA{} means that the paper does not include experiments.
        \item The experimental setting should be presented in the core of the paper to a level of detail that is necessary to appreciate the results and make sense of them.
        \item The full details can be provided either with the code, in appendix, or as supplemental material.
    \end{itemize}

\item {\bf Experiment statistical significance}
    \item[] Question: Does the paper report error bars suitably and correctly defined or other appropriate information about the statistical significance of the experiments?
    \item[] Answer: \answerNo{} % Replace by \answerYes{}, \answerNo{}, or \answerNA{}.
    \item[] Justification: In the community of diffusion/flow policy and VLAs, repeated training do not provide very different results. As a consequence, it is common for existing works not reporting this issue.
    \item[] Guidelines:
    \begin{itemize}
        \item The answer \answerNA{} means that the paper does not include experiments.
        \item The authors should answer \answerYes{} if the results are accompanied by error bars, confidence intervals, or statistical significance tests, at least for the experiments that support the main claims of the paper.
        \item The factors of variability that the error bars are capturing should be clearly stated (for example, train/test split, initialization, random drawing of some parameter, or overall run with given experimental conditions).
        \item The method for calculating the error bars should be explained (closed form formula, call to a library function, bootstrap, etc.)
        \item The assumptions made should be given (e.g., Normally distributed errors).
        \item It should be clear whether the error bar is the standard deviation or the standard error of the mean.
        \item It is OK to report 1-sigma error bars, but one should state it. The authors should preferably report a 2-sigma error bar than state that they have a 96\% CI, if the hypothesis of Normality of errors is not verified.
        \item For asymmetric distributions, the authors should be careful not to show in tables or figures symmetric error bars that would yield results that are out of range (e.g., negative error rates).
        \item If error bars are reported in tables or plots, the authors should explain in the text how they were calculated and reference the corresponding figures or tables in the text.
    \end{itemize}

\item {\bf Experiments compute resources}
    \item[] Question: For each experiment, does the paper provide sufficient information on the computer resources (type of compute workers, memory, time of execution) needed to reproduce the experiments?
    \item[] Answer: \answerYes{} % Replace by \answerYes{}, \answerNo{}, or \answerNA{}.
    \item[] Justification: We have included this.
    \item[] Guidelines:
    \begin{itemize}
        \item The answer \answerNA{} means that the paper does not include experiments.
        \item The paper should indicate the type of compute workers CPU or GPU, internal cluster, or cloud provider, including relevant memory and storage.
        \item The paper should provide the amount of compute required for each of the individual experimental runs as well as estimate the total compute. 
        \item The paper should disclose whether the full research project required more compute than the experiments reported in the paper (e.g., preliminary or failed experiments that didn't make it into the paper). 
    \end{itemize}
    
\item {\bf Code of ethics}
    \item[] Question: Does the research conducted in the paper conform, in every respect, with the NeurIPS Code of Ethics \url{https://neurips.cc/public/EthicsGuidelines}?
    \item[] Answer: \answerYes{} % Replace by \answerYes{}, \answerNo{}, or \answerNA{}.
    \item[] Justification: We follow the NeurIPS Code of Ethics.
    \item[] Guidelines:
    \begin{itemize}
        \item The answer \answerNA{} means that the authors have not reviewed the NeurIPS Code of Ethics.
        \item If the authors answer \answerNo, they should explain the special circumstances that require a deviation from the Code of Ethics.
        \item The authors should make sure to preserve anonymity (e.g., if there is a special consideration due to laws or regulations in their jurisdiction).
    \end{itemize}

\item {\bf Broader impacts}
    \item[] Question: Does the paper discuss both potential positive societal impacts and negative societal impacts of the work performed?
    \item[] Answer: \answerYes{} % Replace by \answerYes{}, \answerNo{}, or \answerNA{}.
    \item[] Justification: Our paper solves problem in robot manipulation, which potentially benefit any robot applications.
    \item[] Guidelines:
    \begin{itemize}
        \item The answer \answerNA{} means that there is no societal impact of the work performed.
        \item If the authors answer \answerNA{} or \answerNo, they should explain why their work has no societal impact or why the paper does not address societal impact.
        \item Examples of negative societal impacts include potential malicious or unintended uses (e.g., disinformation, generating fake profiles, surveillance), fairness considerations (e.g., deployment of technologies that could make decisions that unfairly impact specific groups), privacy considerations, and security considerations.
        \item The conference expects that many papers will be foundational research and not tied to particular applications, let alone deployments. However, if there is a direct path to any negative applications, the authors should point it out. For example, it is legitimate to point out that an improvement in the quality of generative models could be used to generate Deepfakes for disinformation. On the other hand, it is not needed to point out that a generic algorithm for optimizing neural networks could enable people to train models that generate Deepfakes faster.
        \item The authors should consider possible harms that could arise when the technology is being used as intended and functioning correctly, harms that could arise when the technology is being used as intended but gives incorrect results, and harms following from (intentional or unintentional) misuse of the technology.
        \item If there are negative societal impacts, the authors could also discuss possible mitigation strategies (e.g., gated release of models, providing defenses in addition to attacks, mechanisms for monitoring misuse, mechanisms to monitor how a system learns from feedback over time, improving the efficiency and accessibility of ML).
    \end{itemize}
    
\item {\bf Safeguards}
    \item[] Question: Does the paper describe safeguards that have been put in place for responsible release of data or models that have a high risk for misuse (e.g., pre-trained language models, image generators, or scraped datasets)?
    \item[] Answer: \answerYes{} % Replace by \answerYes{}, \answerNo{}, or \answerNA{}.
    \item[] Justification: Our method improve the capability of robots and is not related to safety issue.
    \item[] Guidelines:
    \begin{itemize}
        \item The answer \answerNA{} means that the paper poses no such risks.
        \item Released models that have a high risk for misuse or dual-use should be released with necessary safeguards to allow for controlled use of the model, for example by requiring that users adhere to usage guidelines or restrictions to access the model or implementing safety filters. 
        \item Datasets that have been scraped from the Internet could pose safety risks. The authors should describe how they avoided releasing unsafe images.
        \item We recognize that providing effective safeguards is challenging, and many papers do not require this, but we encourage authors to take this into account and make a best faith effort.
    \end{itemize}

\item {\bf Licenses for existing assets}
    \item[] Question: Are the creators or original owners of assets (e.g., code, data, models), used in the paper, properly credited and are the license and terms of use explicitly mentioned and properly respected?
    \item[] Answer: \answerYes{} % Replace by \answerYes{}, \answerNo{}, or \answerNA{}.
    \item[] Justification: We have properly cited these sources respect the license of use.
    \item[] Guidelines:
    \begin{itemize}
        \item The answer \answerNA{} means that the paper does not use existing assets.
        \item The authors should cite the original paper that produced the code package or dataset.
        \item The authors should state which version of the asset is used and, if possible, include a URL.
        \item The name of the license (e.g., CC-BY 4.0) should be included for each asset.
        \item For scraped data from a particular source (e.g., website), the copyright and terms of service of that source should be provided.
        \item If assets are released, the license, copyright information, and terms of use in the package should be provided. For popular datasets, \url{paperswithcode.com/datasets} has curated licenses for some datasets. Their licensing guide can help determine the license of a dataset.
        \item For existing datasets that are re-packaged, both the original license and the license of the derived asset (if it has changed) should be provided.
        \item If this information is not available online, the authors are encouraged to reach out to the asset's creators.
    \end{itemize}

\item {\bf New assets}
    \item[] Question: Are new assets introduced in the paper well documented and is the documentation provided alongside the assets?
    \item[] Answer: \answerYes{} % Replace by \answerYes{}, \answerNo{}, or \answerNA{}.
    \item[] Justification: We have released the code with structured templates.
    \item[] Guidelines:
    \begin{itemize}
        \item The answer \answerNA{} means that the paper does not release new assets.
        \item Researchers should communicate the details of the dataset\slash code\slash model as part of their submissions via structured templates. This includes details about training, license, limitations, etc. 
        \item The paper should discuss whether and how consent was obtained from people whose asset is used.
        \item At submission time, remember to anonymize your assets (if applicable). You can either create an anonymized URL or include an anonymized zip file.
    \end{itemize}

\item {\bf Crowdsourcing and research with human subjects}
    \item[] Question: For crowdsourcing experiments and research with human subjects, does the paper include the full text of instructions given to participants and screenshots, if applicable, as well as details about compensation (if any)? 
    \item[] Answer: \answerNA{} % Replace by \answerYes{}, \answerNo{}, or \answerNA{}.
    \item[] Justification: The paper does not involve crowdsourcing nor research with human subjects.
    \item[] Guidelines:
    \begin{itemize}
        \item The answer \answerNA{} means that the paper does not involve crowdsourcing nor research with human subjects.
        \item Including this information in the supplemental material is fine, but if the main contribution of the paper involves human subjects, then as much detail as possible should be included in the main paper. 
        \item According to the NeurIPS Code of Ethics, workers involved in data collection, curation, or other labor should be paid at least the minimum wage in the country of the data collector. 
    \end{itemize}

\item {\bf Institutional review board (IRB) approvals or equivalent for research with human subjects}
    \item[] Question: Does the paper describe potential risks incurred by study participants, whether such risks were disclosed to the subjects, and whether Institutional Review Board (IRB) approvals (or an equivalent approval/review based on the requirements of your country or institution) were obtained?
    \item[] Answer: \answerNA{} % Replace by \answerYes{}, \answerNo{}, or \answerNA{}.
    \item[] Justification: The paper does not involve crowdsourcing nor research with human subjects.
    \item[] Guidelines:
    \begin{itemize}
        \item The answer \answerNA{} means that the paper does not involve crowdsourcing nor research with human subjects.
        \item Depending on the country in which research is conducted, IRB approval (or equivalent) may be required for any human subjects research. If you obtained IRB approval, you should clearly state this in the paper. 
        \item We recognize that the procedures for this may vary significantly between institutions and locations, and we expect authors to adhere to the NeurIPS Code of Ethics and the guidelines for their institution. 
        \item For initial submissions, do not include any information that would break anonymity (if applicable), such as the institution conducting the review.
    \end{itemize}

\item {\bf Declaration of LLM usage}
    \item[] Question: Does the paper describe the usage of LLMs if it is an important, original, or non-standard component of the core methods in this research? Note that if the LLM is used only for writing, editing, or formatting purposes and does \emph{not} impact the core methodology, scientific rigor, or originality of the research, declaration is not required.
    %this research? 
    \item[] Answer: \answerNA{} % Replace by \answerYes{}, \answerNo{}, or \answerNA{}.
    \item[] Justification: We use LLM for editing only.
    \item[] Guidelines:
    \begin{itemize}
        \item The answer \answerNA{} means that the core method development in this research does not involve LLMs as any important, original, or non-standard components.
        \item Please refer to our LLM policy in the NeurIPS handbook for what should or should not be described.
    \end{itemize}

\end{enumerate}

\end{document}